\title{Fair Algorithms for Infinite and Contextual Bandits}
\newtheorem{lemma}{Lemma}
\newtheorem{theorem}{Theorem}
\theoremstyle{definition}
\newtheorem{definition}{Definition}
\theoremstyle{remark}
\newtheorem{remark}{Remark}
\newcommand{\ridgefair}{\textsc{RidgeFair}\xspace}
\newcommand{\xt}{\ensuremath{x_t}\xspace}
\newcommand{\yt}{\ensuremath{y_t}\xspace}
\newcommand{\xtp}[1]{\ensuremath{x_{t,#1}}\xspace}
\newcommand{\xs}[1]{\ensuremath{x^*_{#1}}\xspace}
\newcommand{\xts}{\xs{t}}
\newcommand{\xti}{\xtp{i}\xspace}
\newcommand{\ytp}[1]{\ensuremath{y_{t,#1}}\xspace}
\newcommand{\yti}{\ensuremath{\ytp{i}}\xspace}
\newcommand{\et}[1]{\ensuremath{\eta_{#1}}\xspace}
\newcommand{\ett}{\ensuremath{\et{t}}\xspace}
\newcommand{\Ypt}{\ensuremath{Y_t}\xspace}
\newcommand{\Ypti}{\ensuremath{Y_{t,i}}\xspace}
\newcommand{\Yp}[1]{\ensuremath{Y_{#1}}\xspace}
\newcommand{\Yt}{\ensuremath{Y_t}\xspace}
\newcommand{\Y}[1]{\ensuremath{Y_{#1}}\xspace}
\newcommand{\Xt}{\ensuremath{X_t}\xspace}
\newcommand{\D}{\ensuremath{D}\xspace}
\newcommand{\G}{\ensuremath{\mathcal{G}}\xspace}
\newcommand{\X}[1]{\ensuremath{X_{#1}}\xspace}
\newcommand{\bXt}{\ensuremath{{\bf X}_t}\xspace}
\newcommand{\bX}[1]{\ensuremath{{\bf X}_{#1}}\xspace}
\newcommand{\bYt}{\ensuremath{{\bf Y}_t}\xspace}
\newcommand{\bY}[1]{\ensuremath{{\bf Y}_{#1}}\xspace}
\newcommand{\bht}{\ensuremath{\hat{\beta}_t}\xspace}
\renewcommand{\b}{\ensuremath{\beta}\xspace}
\newcommand{\cbxt}{\ensuremath{\langle \b, \xt \rangle}\xspace}
\newcommand{\R}{\mathbb{R}\xspace}
\newcommand{\Ct}{\ensuremath{C_t}\xspace}
\newcommand{\C}[1]{\ensuremath{C_{#1}}\xspace}
\newcommand{\Pt}{\ensuremath{P_t}\xspace}
\renewcommand{\P}[1]{\ensuremath{P_{#1}}\xspace}
\newcommand{\Omt}{\Omega_t}
\newcommand{\xths}{\hat{x}_{t}^*}
\newcommand{\xtpr}{\ensuremath{x_{t'}}}
\newcommand{\xtpri}[1]{\ensuremath{x_{t',#1}}\xspace}
\newcommand{\ytpr}{\ensuremath{y_{t'}}}
\newcommand{\etapr}{\ensuremath{\eta_{t'}}}
\newcommand{\Pst}{\ensuremath{P_{*,t}}\xspace}
\newcommand{\Ps}[1]{\ensuremath{P_{*,#1}}\xspace}
\newcommand{\A}{\ensuremath{\mathcal{A}}\xspace}
\newcommand{\E}[1]{\ensuremath{\mathbb{E}\left[#1 \right]}\xspace}
\newcommand{\Ex}[2]{\mathbb{E}_{#1}\lbrack #2 \rbrack}
\newcommand{\Ep}{\mathbb{E}}
\newcommand{\poly}{\textrm{poly}\xspace}
\newcommand{\LB}{1-bandit\xspace}
\newcommand{\MLB}{m-bandit\xspace}
\newcommand{\UB}{k-bandit\xspace}
\newcommand{\rew}{\textrm{Rew}}
\newcommand{\pp}[1]{\pi_{#1}}
\newcommand{\ppt}{\pp{t}}
\renewcommand{\Pr}[2]{\mathbb{P}_{#1}\left[#2\right]}
\newcommand{\vecdot}[2]{\langle #1, #2 \rangle}
\newcommand{\eeGap}{\textsc{FairGap}\xspace}
\newcommand{\aeeGap}{\textsc{ApproxFairGap}}
\newcommand{\toptwo}{\textsc{TopTwo}\xspace}
\newcommand{\uar}{\textrm{UAR}}
\newcommand{\gap}{\Delta_{\text{gap}}}
\newcommand{\prior}{\tau}
\newcommand{\iht}{{\hat{i}_t\xspace}}
\newcommand{\xht}{{\hat{x}_t\xspace}}
\newcommand{\ist}{{i_{*,t}\xspace}}
\newcommand{\rti}{r_{t,i}\xspace}
\newcommand{\rt}[1]{{r_{t,#1}}\xspace}
\newcommand{\rflb}{\ensuremath{\ridgefair_1}\xspace}
\newcommand{\rfmlb}{\ensuremath{\ridgefair_m}\xspace}
\newcommand{\rfub}{\ensuremath{\ridgefair_{\leq k}}\xspace}
\newcommand{\rfj}{\ensuremath{\ridgefair_j}\xspace}
\newcommand{\ey}[1]{\hat{y}_{t,#1}\xspace}
\newcommand{\eyi}{\ey{i}\xspace}
\newcommand{\w}[1]{w_{t,#1}\xspace}
\newcommand{\wt}{w_t\xspace}
\newcommand{\wti}{w_{t,i}\xspace}
\renewcommand{\xti}{x_{t,i}\xspace}
\newcommand{\low}[2]{\ell_{#1, #2}\xspace}
\newcommand{\up}[2]{u_{#1, #2}\xspace}
\newcommand{\interval}[2]{\left[\low{#1}{#2}, \up{#1}{#2}\right]\xspace}
\newcommand{\pick}{\textsc{Pick}\xspace}
\newcommand{\argmax}{\textrm{argmax}\xspace}
\renewcommand{\S}[1]{S_{#1}\xspace}
\newcommand{\ettb}{{\bf \eta}_t}
\newcommand{\regret}[1]{\textsc{Regret}\left(#1\right)\xspace}
\newcommand{\Xtrans}{{\bf X}^T}
\newcommand{\Xttrans}{{{\bf X}_{t}}^T}
\newcommand{\Vt}{\bar{V}_t}
\newcommand{\VT}{\bar{V}^T}
\newcommand{\Vti}{(\bar{V}_t)^{-1}}
\author[1]{Matthew Joseph \thanks{\texttt{majos@cis.upenn.edu}}}
\author[1]{Michael Kearns \thanks{\texttt{mkearns@cis.upenn.edu}}}
\author[1]{Jamie Morgenstern \thanks{\texttt{jamiemor@cis.upenn.edu}}}
\author[2]{Seth Neel \thanks{\texttt{sethneel@wharton.upenn.edu}} }
\author[1]{Aaron Roth \thanks{\texttt{aaroth@cis.upenn.edu}} }
\affil[1]{Computer and Information Science, University of Pennsylvania}
\affil[2]{Statistics Department, The Wharton School, University of Pennsylvania}
\begin{document}
\date{}
\maketitle

\begin{abstract}
  We study fairness in linear bandit problems. Starting from the
  notion of meritocratic fairness introduced in~\citet{JKMR16}, we
  carry out a more refined analysis of a more general problem,
  achieving better performance guarantees with fewer modelling
  assumptions on the number and structure of available choices as well
  as the number selected. We also analyze the previously-unstudied
  question of fairness in infinite linear bandit problems, obtaining
  instance-dependent regret upper bounds as well as lower bounds
  demonstrating that this instance-dependence is necessary. The result
  is a framework for meritocratic fairness in an online linear setting
  that is substantially more powerful, general, and realistic than the
  current state of the art.
\end{abstract}

\section{Introduction}\label{sec:intro}
The problem of repeatedly making choices and learning from choice
feedback arises in a variety of settings, including granting loans,
serving ads, and hiring. Encoding these problems in a \emph{bandit}
setting enables one to take advantage of a rich body of existing
bandit algorithms. UCB-style algorithms, for example, are guaranteed
to yield no-regret policies for these problems.

\citet{JKMR16}, however, raises the concern that these no-regret
policies may be \emph{unfair}: in some rounds, they will choose
options with lower expected rewards over options with higher expected
rewards, for example choosing less qualified job applicants over more
qualified ones. Consider a UCB-like algorithm aiming to hire all
qualified applicants in every round.  As time goes on, any no-regret
algorithm must behave unfairly for a vanishing fraction of rounds, but
the total number of \emph{mistreated} people -- in hiring, people who
saw a less qualified job applicant hired in a round in which they
themselves were not hired -- can be large (see
Figure~\ref{fig:mistreatment}).


\begin{wrapfigure}{r}{0.38\textwidth}
\begin{center}
\vspace{-22pt}
\includegraphics[width=0.38\textwidth]
{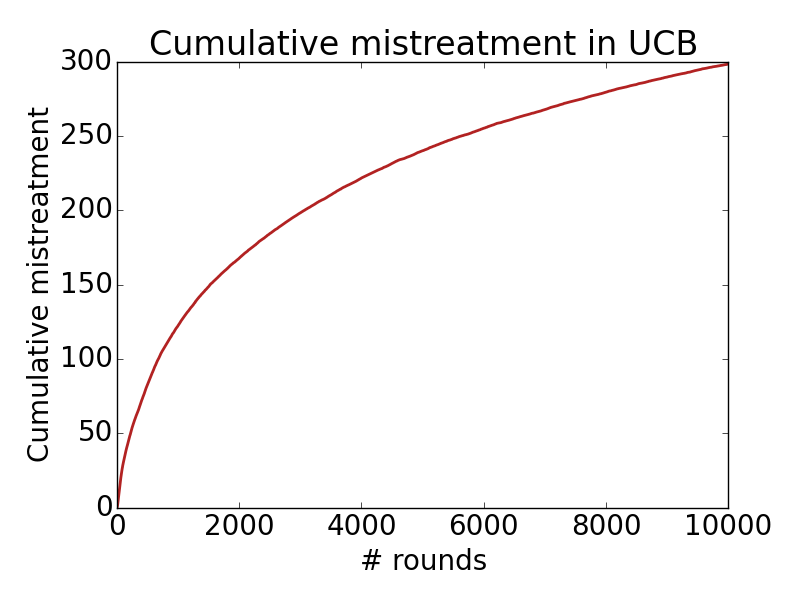}
\vspace{-22pt}
\caption{Cumulative mistreatments for UCB. See
  Section~\ref{subsec:experiments} in supplement for details and
  additional experimental evaluation of the structure of
  mistreatment.}
\end{center}
\label{fig:mistreatment}
\vspace{-91pt}
\end{wrapfigure}

~\citet{JKMR16} then design no-regret algorithms which minimize mistreatment
and are fair in the following
sense: their algorithms (with high probability) never at any round
place higher selection probability on a less qualified applicant than
on a more qualified applicant. However, their analysis assumes that
there are $k$
well-defined groups, each with its own mapping from features to
expected rewards; at each round exactly one individual from each group
arrives; and exactly one individual is chosen in each
round. In the hiring setting, this equates to assuming that a company
receives one job applicant from each group and must hire exactly one
(rather than $m$ or all qualified applicants) introducing an
unrealistic element of competition and unfairness both between applicants and
between groups.

The aforementioned assumptions are unrealistic in many practical
settings; our work shows they are also \emph{unnecessary}.
Meritocratic fairness can be defined without reference to groups, and
algorithms can satisfy the strictest form of meritocratic fairness
without any knowledge of group membership. Even without this
knowledge, we design algorithms which will be fair with respect to
\emph{any} possible group structure over individuals.  In
Section~\ref{sec:model}, we present this general definition of
fairness. The definition further allows for the number of individuals
arriving in any round to vary, and is sufficiently flexible to apply
to settings where algorithms can select $m \in [k]$ individuals in
each round. By virtue of the definition making no reference to groups,
the model makes no assumptions about how many individuals arriving at
time $t$ belong to any group. A company can then consider a large pool
of applicants, not necessarily stratified by race or gender, with an
arbitrary number of candidates from any one of these populations, and
hire one or $m$ or even every qualified applicant.

We then present a framework for designing meritocratically fair online linear
contextual bandit algorithms. In Section~\ref{sec:finite}, we show how to
design fair algorithms when at most some finite number $k$ of individuals
arrives in any round (the linear contextual bandits
problem~\citep{abe2003reinforcement,auer2002using}), as well as when $m$ 
individuals may be chosen in each round (the ``multiple play" introduced and
studied absent fairness in~\citet{AVW87}).  We therefore study a much more
general model than~\citep{JKMR16} and, in Section~\ref{sec:finite},
substantially improve upon their black-box regret guarantees for linear bandit
problems using a technical analysis specific to the linear setting.

However, these regret bounds still scale (polynomially) with $k$, the
maximum number of individuals seen in any given round. This may be
undesirable for large $k$, thus motivating the investigation of fair
algorithms for the \emph{infinite} bandit setting (the online linear
optimization with bandit feedback
problem~\cite{flaxman2005online}).\footnote{We note that both the
  finite and infinite settings have infinite numbers of potential
  candidates: the difference arises in how many choices an algorithm
  has in a given round.} In Section~\ref{sec:infinite} we provide such
an algorithm via an adaptation of our general confidence
interval-based framework that takes advantage of the fact that optimal solutions to linear programs must be \emph{extreme points} of the feasible region. We then prove, subject to certain
assumptions, a regret upper bound that depends on $\gap$, an
instance-dependent parameter based on the distance between the best
and second-best extreme points in a given choice set. 

In Section~\ref{sec:lower_bound1} we show that this instance dependence
is almost tight by exhibiting an infinite choice set satisfying our
assumptions for which \emph{any} fair algorithm must incur regret
dependent polynomially on $\gap$, separating this setting from the online linear optimization setting absent a fairness constraint. Finally, we justify our assumptions
on the choice set by in Section~\ref{sec:lower_bound2} exhibiting a
choice set that both violates our assumptions and admits \emph{no}
fair algorithm with nontrivial regret guarantees. A condensed
presentation of our methods and results appears in
Figure~\ref{fig:results}.

Finally, we note that our algorithms share an overarching logic for reasoning
about fairness. These algorithms all satisfy fairness by
\emph{certifying optimality}, never giving preferential treatment to
$x$ over $y$ unless the algorithm is \emph{certain} that $x$ has
higher reward than $y$. The algorithms accomplish this by computing
confidence intervals around the estimated rewards for individuals. If
two individuals have overlapping confidence intervals, we say they are
\emph{linked}; if $x$ can be reached from $y$ using a sequence of
linked individuals, we say they are \emph{chained}.

\begin{figure}[h]
\vspace{10pt}
 \begin{center}
  \begin{tabular}{| p{2cm} |p{1.5cm}|p{3.3cm}|p{2cm}|p{3cm}| }\hline
\thead{  \# selected\\ each round} &\makecell[l]{ \# options\\ each round} & Technique & Notes & \thead{Regret} \\\hline
 \thead{ Exactly \\ $j \leq k$} & $\leq k$ &\makecell*[l]{Play all of chains in
 \\ descending order,\\ randomizing over \\ last chain as necessary \\ to pick
 exactly $j$} & \makecell[l]{Requires\\ randomness} & $\tilde{O}\left(dkj\sqrt{T}\right)$  \\ \hline
 \thead{Unconstrained} & $\leq k$ & Select all in every chain with highest UCB $> 0$
& Deterministic & $\tilde{O}\left(dk^2\sqrt{T}\right)$  \\ \hline
    Exactly $1$  & \thead{$\infty$\\ bounded\\ convex set\\ $\gap > 0$} & \makecell[l]{Play uniquely best point\\ or UAR from entire set} & \makecell[l]{Requires\\ randomness}&\thead{ $\tilde{O}\left(c \cdot \log(T)/\gap^2\right)$\\ $\tilde{\Omega}(1/\gap)$ \\ $\Omega(T)$ for $\gap = 0$}   \\ \hline
\end{tabular}
\end{center}
\caption{A description of various settings in which our framework
  provides fair algorithms. In all cases, fairness can be imposed only
  across pairs for any partitioning of the input space; the bounds
  here assume they bind across all pairs, and are therefore worst-case
  upper bounds. See Section~\ref{sec:infinite} for a complete
  explanation of the distribution-dependent constant $c$ in the regret
  bound for the infinite case.}
\label{fig:results}
\vspace{-10pt}
\end{figure}

\subsection{Related Work and Discussion of Our Fairness Definition}\label{sec:related}
Fairness in machine learning has seen substantial recent growth as a
subject of study, and many different definitions of fairness exist. We
provide a brief overview here; see e.g.~\citet{BHJKR17}
and~\citet{CPFGH17} for detailed descriptions and comparisons of these
definitions.

Many extant fairness notions are predicated on the existence of
\emph{groups}, and aim to guarantee that certain groups are not
unequally favored or mistreated. In this vein,~\citet{HPS16}
introduced the notion of \emph{equality of opportunity}, which
requires that a classifier's predicted outcome should be independent
of a protected attribute (such as race) conditioned on the true
outcome, and they and~\citet{WGOS17} have studied the feasibility and
possible relaxations thereof. Similarly, ~\citet{ZVRG17} analyzed an
equivalent concurrent notion of (un)fairness they call \emph{disparate
  mistreatment}. Separately,~\citet{KMR17} and~\citet{C17} showed that
different notions of group fairness may (and sometimes must) conflict
with one another.

This paper, like~\citet{JKMR16}, departs from the work above in a
number of ways. We attempt to capture a particular notion of
\emph{individual} and \emph{weakly meritocratic} fairness that holds
\emph{throughout the learning process}. This was inspired
by~\citet{DHPRZ12}, who suggest fair treatment equates to treating
``similar'' people similarly, where similarity is 
defined with respect to an assumed pre-specified task-specific metric.  Taking
the fairness formulation of~\citet{JKMR16} as our starting point, our
definition of fairness does not promise to correct for past inequities
or inaccurate or biased data. Instead, it assumes the existence of an
accurate mapping from features to true quality for the task at
hand\footnote{~\citet{FSV16} provide evidence that providing fairness
  from bias-corrupted data is quite difficult.}  and promises fairness
while learning and using this mapping in the following sense: any
\emph{individual} who is currently more qualified (for a job, loan, or
college acceptance) than another individual will always have at least
as good a chance of selection as the less qualified individual.

The one-sided nature of this guarantee, as well as its formulation in terms of
quality, leads to the name \emph{weakly meritocratic} fairness. Weakly
meritocratic fairness may then be interpreted as a minimal guarantee of
fairness: an algorithm satisfying our fairness definition cannot favor a worse
option but is not required to favor a better option. In this sense our fairness
requirement encodes a necessary variant of fairness rather than a completely
sufficient one. This makes our upper bounds (Sections~\ref{sec:finite}
and~\ref{sec:infinite}) relatively weaker and our lower bounds
(Sections~\ref{sec:lower_bound1} and~\ref{sec:lower_bound2}) relatively
stronger.

We additionally note that our fairness guarantees require fairness
\emph{at every step of the learning process}. We view this as an
important point, especially for algorithms whose learning processes
may be long (or even continuous).  Furthermore, while it may seem
reasonable to relax this requirement to allow a small fraction of
unfair steps, it is unclear how to do so without enabling
discrimination against a correspondingly small population.

Finally, while our fairness definition draws from~\citet{JKMR16}, we
work in what we believe to be a significantly more general and
realistic setting. In the finite case we allow for a variable number
of individuals in each round from a variable number of groups and also
allow selection of a variable number of individuals in each round,
thus dropping several assumptions from~\citet{JKMR16}. We also analyze
the previously unstudied topic of fairness with infinitely many
choices.

\section{Model}\label{sec:model}

Fix some $\b\in [-1,1]^d$, the underlying linear coefficients of our
learning problem, and $T$ the number of rounds. For each $t\in [T]$,
let $\Ct \subseteq \D = [-1,1]^d$ denote the set of available choices
in round $t$. We will consider both the ``finite'' action case, where
$|\Ct|\leq k$, and the infinite action case. An algorithm \A, facing
choices \Ct, picks a subset $\Pt\subseteq \Ct$, and for each
$\xt\in \Pt$, \A observes reward $\yt \in [-1,1]$ such that
$\E{\yt} = \cbxt$, and the distribution of the noise
$\ett = \yt - \cbxt$ is sub-Gaussian (see
Section~\ref{sec:subgaussian} for a definition of sub-Gaussian).

Refer to all observations in round $t$ as $\Ypt \in [-1,1]^{|\Pt|}$
where $\Ypti = \yti$ for each $\xti\in\Pt$.  Finally, let
$\bXt = [\X{1}; \ldots; \X{t}], \bYt = [\Y{1}; \ldots; \Y{t}]$ refer
to the design and observation matrices at round $t$.

We are interested in settings where an algorithm may face size
constraints on $\Pt$. We consider three cases: the standard linear
bandits problem ($|\Pt| = 1$), the multiple choice linear bandits
problem ($|\Pt| = m$), and the heretofore unstudied (to the best of
the authors' knowledge) case in which the size of $\Pt$ is
unconstrained.  For short, we refer to these as \LB, \MLB, and
\UB. 

\paragraph{Regret}
The notion of regret we will consider is that of pseudo-regret.
Facing a sequence of choice sets $\C{1}, \ldots, \C{T}$, suppose $\A$
chooses sets $\P{1}, \ldots, \P{T}$.\footnote{If these are randomized
  choices, the randomness of $\A$ is incorporated into the expected
  value calculations.}  Then, the expected reward of $\A$ on this
sequence is
$\rew(\A) = \E{\sum_{t\in [T]}\left[\sum_{\xt\in \Pt}\yt\right]}$.

Refer to the sequence of feasible choices\footnote{We assume these
  have the appropriate size for each problem we consider: singletons
  in the \LB problem, size at most $m$ in the \MLB problem, and
  arbitrarily large in the \UB problem.}  which maximizes expected
reward as $\Ps{1}\subseteq \C{1}, \ldots, \Ps{T} \subseteq \C{T}$,
defined with full knowledge of $\b$.

Then, the {\bf pseudo-regret} of $\A$ on any sequence is defined as

\[ \rew( \Ps{1}, \ldots, \Ps{T}) - \rew(\A) = R(T).\]

The {\bf pseudo-regret} of $\A$ refers to the maximum pseudo-regret
$\A$ incurs on any sequence of choice sets and any $\b\in [-1,1]^d$.
If $R(T) = o(T)$, then $\A$ is said to be {\bf no-regret}. If, for any
input parameter $\delta > 0$, $R(T)$ upper-bounds the expectation of
the rewards of the sequence chosen by $\A$ with probability
$1-\delta$, then we call this a \emph{high-probability} regret bound
for $\A$.
\paragraph{Fairness}
Consider an algorithm $\A$, which chooses a sequence of
\emph{probability distributions} $\pp{1}, \pp{2}, \ldots, \pp{T}$ over
feasible sets to pick, $\pp{t} \in \Delta(2^{\Ct})$. Note that
distribution $\pp{t}$ depends upon $\C{1}, \ldots, \C{t}$, the choices
$\P{1}, \ldots, \P{t-1}$, and $\Yp{1}, \ldots, \Yp{t-1}$.

We now give a formal definition of fairness of an algorithm for the
\LB, \MLB, and \UB problems.  We adapt our fairness definition
from~\citet{JKMR16}, generalizing from discrete distributions over
finite action sets to mixture distributions over possibly infinite
action sets.  We slightly abuse notation and refer to the probability
density and mass functions of an element $x\in \Ct$: this refers to
the marginal distribution of $x$ being chosen (namely, the probability
that $x$ belongs to the set picked according to the distribution
$\pp{t}$).

\begin{definition}[Weakly Meritocratic Fairness]\label{def:fair}
  We say that an algorithm $\A$ is \emph{weakly meritocratic} if, for
  any input $\delta \in (0,1]$ and any $\b$, with probability at least
  $1-\delta$, at every round $t$, for every $x, x' \in \Ct$ such that
  $\vecdot{\b}{x} \geq \vecdot{\b}{x'}$: 
  \begin{itemize}
  \item If $\pi_t$ is a discrete distribution: For
    $g_{t}(x) = \pi_{t}(x)$ (the probability mass function)
    \[g_{t}(x) \geq g_{t}(x').\]

  \item If $\pi_t$ is a continuous distribution: For
    $g_{t}(x) = f_{t}(x)$ (the probability density function)
    \[g_{t}(x) \geq g_{t}(x').\]

  \item If $\pi_t$ can be written as a mixture distribution:
  $\sum_{i} \alpha_i \pi_{ti}, \sum_i \alpha_i = 1$, such that each constituent distribution
   $\pi_{ti} \in \Delta(2^{\Ct})$ is either discrete or
  continuous and satisfies one of the above two conditions.
  \end{itemize}
  For brevity, as consider only this fairness notion in this paper, we
  will refer to weakly meritocratic fairness as ``fairness''.  We say
  $\A$ is {\bf round-fair} at time $t$ if $\pi_t$ satisfies the above
  conditions.
\end{definition}

This definition can be easily generalized over any partition $\G$ of
$\D$, by requiring this weak monotonicity hold \emph{only for pairs
  $x, x'$ belonging to different elements of the partition $G, G'$}.
The special case above of the singleton partition is the most
stringent choice of partition. We focus our analysis on the singleton
partition as a minimal worst-case framework, but this model easily
relaxes to apply only across groups, as well as to only requiring
``one-sided'' monotonicity, where monotonicity is required only for
pairs where the more qualified member belongs to group $G$ rather than
$G'$.

\begin{remark}
  In the \UB setting, Definition~\ref{def:fair} can be simplified to
  require, with probability $1-\delta$ over its observations, an
  algorithm \emph{never} select a less-qualified individual over
  more-qualified one in any round, and can be satisfied by
  deterministic algorithms.
\end{remark}
\section{Finite Action Spaces: Fair Ridge Regression}\label{sec:finite}

In this section, we introduce a family of fair algorithms for linear
\LB, \MLB, and the (unconstrained) \UB problems. Here, an algorithm sees a slate of at most $k$ distinct individuals each round
and selects some subset of them for reward and observation. This
allows us to encode settings where an algorithm repeatedly observes a
new pool of $k$ individuals, each represented by a vector of $d$
features, then decides to give some of those individuals loans based
upon those vectors, observes the quality of the individuals to whom
they gave loans, and updates the selection rule for loan
allocation. The regret of these algorithms will scale polynomially in
$k$ and $d$ as the algorithm gets tighter estimates of $\b$.

All of the algorithms are based upon the following template. They
maintain an estimate $\bht$ of $\b$ from observations, along with
confidence intervals around the estimate. They use $\bht$ to estimate
the rewards for the individuals on day $t$ and the confidence
interval around $\bht$ to create a confidence interval around each of
these estimated rewards. Any two individuals whose intervals overlap
on day $t$ will picked with the same probability by the algorithm.
Call any two individuals whose intervals overlap on day $t$ \emph{linked}, and
any two individuals belonging to the transitive closure of the linked
relation \emph{chained}. Since any two linked individuals will chosen
with the same probability, any two chained individuals will also be
chosen with the same probability.

An algorithm constrained to pick exactly $m\in [k]$ individuals
each round will pick them in the following way. Order the chains by
their highest upper confidence bound. In that order, select all
individuals from each chain (with probability $1$ while that results
in taking fewer than $m$ individuals. When the algorithm arrives at
the first chain for which it does not have capacity to accept every
individual in the chain, it selects to fill its capacity uniformly at
random from that chain's individuals.  If the algorithm can pick any
number of individuals, it will pick all individuals chained to any
individual with positive upper confidence bound.

We now present the regret guarantees for fair \LB, \MLB, and \UB using
this framework.

\begin{theorem}\label{thm:regret-finite}
  Suppose, for all $t$, $\ett$ is $1$-sub-Gaussian,
  $\Ct \subseteq [-1,1]^d$, and $||\xt||_2 \leq 1$ for all
  $\xt\in\Ct$, and $||\b|| \leq 1$. Then, \rflb, \rfmlb, and \rfub are
  fair algorithms for the \LB, \MLB, and \UB problems,
  respectively. With probability $1-\delta$, for $j \in \{1, m, k\}$,
  the regret of \rfj is
\[R(T) = O\left(d k j \sqrt{T}\log\left(\frac{T}{\delta}\right)\right) =
\tilde{O}(dkj\sqrt{T}).\]
\end{theorem}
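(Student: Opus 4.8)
The plan is to combine a standard self-normalized confidence-interval analysis for ridge regression (à la Abbasi-Yadkori et al.) with a combinatorial accounting of the cost of the chaining rule, and to verify fairness separately from regret. First I would establish the confidence-interval guarantee: using the regularized least-squares estimate $\bht = \Vti \Xttrans \bYt$ with $\Vt = \lambda I + \Xttrans\bXt$, I would invoke the self-normalized martingale bound to show that, with probability $1-\delta$, for every round $t$ and every $x \in \Ct$ simultaneously, $|\vecdot{\bht}{x} - \vecdot{\b}{x}| \le \w{} \cdot \|x\|_{\Vti}$ for an appropriate width $\w{} = \tilde O(\sqrt{d})$ (absorbing $\log(T/\delta)$ factors and the boundedness of $\b$, $x$, and the sub-Gaussian parameter). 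Call the resulting interval around $\vecdot{\bht}{x}$ the confidence interval $\interval{t}{x}$; condition on the ``good event'' that all these intervals are valid for the remainder of the argument.

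\medskip
\noindent\textbf{Fairness.} On the good event, if $\vecdot{\b}{x} \ge \vecdot{\b}{x'}$ but the algorithm assigned $x'$ strictly higher selection probability than $x$, then by construction $x$ and $x'$ lie in different chains and the chain of $x'$ is ranked strictly above the chain of $x$ (or $x'$'s chain has positive UCB while $x$'s does not, in the \UB case). In either case their confidence intervals are disjoint with $x'$'s lying entirely above $x$'s, which forces $\vecdot{\b}{x'} > \vecdot{\b}{x}$ --- a contradiction. The randomization over the last partially-selected chain is uniform, so all individuals in a chain get equal probability; hence the distribution $\pi_t$ is a mixture of the required form and round-fairness holds at every $t$. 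In the \UB case no randomization is needed, giving the deterministic guarantee of the Remark.

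\medskip
\noindent\textbf{Regret.} Here the key step is to bound the per-round regret by the total ``width'' of the chains actually played. On the good event, whenever the algorithm selects some $x \in \Pt$ in place of an optimal choice $x^* \in \Pst$, fairness/chaining forces $x^*$ to be in the same chain as $x$ (or in a chain that was fully selected), and along a chain any two confidence intervals are connected through at most $|\Ct| \le k$ overlapping intervals, so $\vecdot{\b}{x^*} - \vecdot{\b}{x} \le 2k \cdot \max_{z \in \Ct} \w{}\|z\|_{\Vti}$. Summing over the $j$ selected individuals gives a per-round regret of $O\big(kj \cdot \w{} \max_{z}\|z\|_{\Vti}\big)$. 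Summing over $t \in [T]$, I would apply Cauchy--Schwarz and the elliptical potential (potential function) lemma, $\sum_{t=1}^T \|x_t\|_{\Vti}^2 = O(d\log T)$, together with $\w{} = \tilde O(\sqrt d)$, to conclude $R(T) = \tilde O\big(kj \sqrt{d} \cdot \sqrt{T \cdot d \log T}\big) = \tilde O(dkj\sqrt T)$, matching the three cases $j \in \{1, m, k\}$.

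\medskip
The main obstacle I anticipate is the regret-accounting step: correctly charging the ``surplus'' reward lost along a chain to the confidence widths. One must be careful that (i) the chain containing an optimal element can be traversed in $\le k$ hops so the telescoped width is only $O(k)$ times a single width rather than something larger, (ii) when the algorithm plays an entire chain that does \emph{not} contain an optimal point, that chain's top UCB exceeded the threshold (or the $m$-th best chain's), so its elements are still within $O(k)$ widths of optimal --- this requires the ordering-by-highest-UCB rule to be used correctly; and (iii) the potential lemma is applied to the sequence of \emph{all} selected points, which is why the $j$ (number selected per round) enters linearly. Everything else --- the concentration bound, the boundedness bookkeeping, and the mixture-distribution check for fairness --- is routine.
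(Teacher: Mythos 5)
Your confidence-interval construction and fairness argument match the paper's almost exactly, and your instantaneous-regret bound (regret of a played point is at most the total width of its chain, hence at most $k$ widths) is also the paper's starting point. The genuine gap is in the step where you sum over rounds: you bound the per-round regret by $2k\cdot\max_{z\in\Ct} w\,\|z\|_{\Vti}$ and then propose to control the sum over $t$ via the elliptical potential lemma. But that lemma only gives $\sum_{t}\|x_{t,\iht}\|_{\Vti}^2 = O(d\log T)$ for the points that are actually \emph{selected and added to the design matrix}. The widths appearing in your per-round bound belong to arbitrary points of the chain (or of $\Ct$), most of which are never observed; their $\Vti$-norms need not shrink at all, and $\sum_t \max_{z\in\Ct}\|z\|_{\Vti}$ has no potential-lemma control. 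This is precisely the obstacle the paper singles out as requiring non-standard analysis: unlike UCB, whose per-round regret is the width of the \emph{played} arm, here the regret is a sum of widths of mostly \emph{unplayed} arms. You correctly flag this as ``the main obstacle I anticipate'' but do not supply the resolution, so as written the final $\tilde O(dkj\sqrt{T})$ bound does not follow.

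The missing idea is the paper's use of the uniform randomization inside the chain: since $\iht$ is drawn uniformly at random from the active chain $\S{t}$, one has $\E{\w{\iht}} = \frac{1}{|\S{t}|}\sum_{j\in \S{t}}\w{j}$, hence $\sum_{j\in\S{t}}\w{j} = |\S{t}|\cdot\E{\w{\iht}} \le k\cdot\E{\w{\iht}}$. This converts the sum of widths over the (mostly unobserved) chain into $k$ times the expected width of the \emph{selected} point, and only then can Cauchy--Schwarz and the potential bound $\sum_t\|x_{t,\iht}\|_{\Vti}^2 \le 2d\log(1+T/d\gamma)$ be applied to the sequence of observed points. (For $j=m$ the paper additionally matches each optimal point to a selected point in the same chain, and for the unconstrained case it charges each selected point at most the sum of all $k$ widths, which is where the extra factor of $j$ comes from.) Your remaining steps --- concentration, fairness, and the mixture-distribution check --- are correct and essentially identical to the paper's.
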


We pause to compare our bound for \LB to that found in~\citet{JKMR16}.
Their work supposes that each of $k$ groups has an independent
$d$-dimensional linear function governing its reward and provides a
fair algorithm regret upper bound of
$\tilde{O}\left(T^{\frac{4}{5}}k^{\frac{6}{5}}d^{\frac{3}{5}},
  k^3\right)$.
To directly encode this setting in ours, one would need to use a
single $dk$-dimensional linear function, yielding a regret bound of
$\tilde{O}\left(dk^2\sqrt{T}\right)$. This is an improvement on their upper
bound for all values of $T$ for which the bounds are non-trivial (recalling that the bound
from~\citet{JKMR16} becomes nontrivial for $T > d^3k^6$, while the
bound here becomes nontrivial for $T > d^2k^4$). We also briefly
observe that~\rfub satisfies an additional ``fairness'' property: with high
probability, it always selects \emph{every} available individual with
positive expected reward.

Each of these algorithms will use $\ell_2$-regularized least-squares
regressor to estimate $\b$. Given a design matrix $\bX{}$, response
vector $\bY{}$, and regularization parameter $\gamma \geq 1$ this is
of the form
$\hat{\beta} = (\Xtrans \bX{} + \gamma I)^{-1} \Xtrans \bY{}$.  Valid
confidence intervals (that contain $\beta$ with high probability) are
nontrivial to derive for this estimator (which might be biased); to
construct them, we rely on martingale matrix concentration
results~\citep{APS11}.

We now sketch how the proof of Theorem~\ref{thm:regret-finite}
proceeds, deferring a full proof (of this and all other results in
this paper) and pseudocode to the supplementary materials.
We first establish that,
with probability $1-\delta$, for all rounds $t$, for all
$\xti\in \Ct$, that $\yti \in \interval{t}{i}$ (i.e. that the
confidence intervals being used are valid).  Using this fact, we
establish that the algorithm is fair. The algorithm plays any two
actions which are linked with equal probability in each round, and any
action with a confidence interval above another action's confidence
interval with weakly higher probability. Thus, if the payoffs for the
actions lie anywhere within their confidence intervals, \ridgefair is
fair, which holds as the confidence intervals are valid.

Proving a bound on the regret of \ridgefair requires some non-standard
analysis, primarily because the widths of the confidence intervals
used by the algorithm do not shrink uniformly. The sum of the widths
of the intervals of our \emph{selected} (and therefore observed)
actions grows sublinearly in $t$. UCB variants, by virtue of playing
an action $a$ with highest upper confidence bound, have regret in
round $t$ bounded by $a$'s confidence interval width. \ridgefair,
conversely, suffers regret equal to the \emph{sum} of the confidence
widths of the chained set, while only receiving feedback for the action
it actually takes. We overcome this obstacle by relating the sum of
the confidence interval widths of the linked set to the sum of the
widths of the selected actions.

\section{Fair algorithms for convex action sets}\label{sec:infinite}
In this section we analyze linear bandits with infinite choice sets in the
familiar \LB setting.\footnote{Note that no-regret guarantees are in general
impossible for infinite choice sets in \MLB and \UB settings, since the
continuity of the infinite choice sets we consider makes selecting multiple
choices while satisfying fairness impossible without choosing uniformly at
random from the entire set.} We provide a fair algorithm with an
instance-dependent sublinear regret bound for infinite choice sets --
specifically convex bodies -- below.  In Section~\ref{sec:lower_bound1} we
match this with lower bounds showing that instance dependence is an
unavoidable cost for fair algorithms in an infinite setting.

A naive adaptation of \ridgefair to an infinite setting requires maintenance of
infinitely many confidence intervals and is therefore impractical. We instead
assume that our choice sets are convex bodies and exploit the resulting
geometry: since our underlying function is linear, it is maximized
at an \emph{extremal} point. This simplifies the problem, since we need only
reason about the relative quality of extremal points. The relevant quantity is
$\gap$, a notion adapted from~\citet{DHK08} that denotes the difference in
reward between the best and second-best extremal
points in the choice set. When $\gap$ is large it is easier to confidently
identify the optimal choice and select it deterministically without violating
fairness. When $\gap$ is small, it is more difficult to determine which of the
top two points is best -- and since deterministically selecting the wrong one
violates fairness for any points infinitesimally close to the true best point,
we are forced to play randomly from the entire choice set.

Our resulting fair algorithm, \eeGap, proceeds as follows: in each round it
uses its current estimate of $\b$ to construct confidence intervals around the
two choices with highest estimated reward and selects the higher one if these
intervals do not overlap; otherwise, it selects uniformly at random from the
entire convex body. We prove fairness and bound regret by analyzing the rate at
which random exploration shrinks our confidence intervals and relating it to
the frequency of exploitation, a function of $\gap$. We begin by formally
defining $\gap$ below.

\begin{definition}[Gap, adapted from~\citet{DHK08}]
  Given sequence of action sets $\C{} = (\C{1}, \ldots, \C{T})$,
  define $\Omt$ to be the set of extremal points of $\Ct$, i.e.  the
  points in $\Ct$ that cannot be expressed as a proper convex
  combination of other points in $\Ct$, and let
  $\xts= \max_{x \in \Ct} \vecdot{\b}{x}$.  The \emph{gap} of $\Ct$ is
  \[\gap = \min_{1 \leq t \leq T} \left(\inf_{\xt \in \Omt, \xt \neq
      \xts} \vecdot{\b}{\xts - \xt}\right).\]
\end{definition}

$\gap$ is a lower bound on difference in payoff between the
optimal action and any other extremal action in any $\Ct$. When
$\gap > 0$, this implies the existence of a unique optimal action in
each $\Ct$. Our algorithm (implicitly) and our analysis (explicitly)
exploits this quantity: a larger gap enables us to confidently
identify the optimal action more quickly.

We now present the regret and fairness guarantees for $\eeGap$.

\label{thm:gapfair}
\begin{theorem}
	Given sequence of action sets $\C{} = (\C{1}, \ldots, \C{T})$ where
	each $\Ct$ has nonzero Lebesgue measure and is contained in a ball
	of radius $r$ and feedback with $R$-sub-Gaussian noise, $\eeGap$ is fair and
	achieves
	\[
		\regret{T} = O\left(\frac{r^6R^2 \ln(2T/\delta)}
		{\kappa^2 \lambda^2 \gap^2}\right)
	\]
	where $\kappa = 1 - r\sqrt{\frac{2\ln\left(\frac{2dT}{\delta}\right)}
    {T\lambda}}$ and $\lambda = \min_{1 \leq t \leq T} \left[ \lambda_{\min}
    (\Ex{\xt \sim_\uar \Ct}{{\xt}^T\xt})\right]$
\end{theorem}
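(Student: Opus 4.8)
The strategy is to isolate a single ``good event'' on which every confidence interval used by \eeGap is simultaneously valid, to derive both fairness and a \emph{zero}-regret guarantee for the exploitation rounds directly on this event, and then to bound the number of exploration rounds by a self-referential eigenvalue argument. For the good event, I would invoke the self-normalized martingale concentration bound of~\citet{APS11} applied to the $\ell_2$-regularized least-squares estimator $\hat\beta_t$: with probability at least $1-\delta$, for every round $t$ and every $x\in\D$ one has $|\vecdot{\hat\beta_t-\b}{x}|\le \beta_t\,\|x\|_{\Vt^{-1}}$ for an appropriate confidence radius $\beta_t = O(R\sqrt{\ln(2T/\delta)})$; since $\|x\|_2\le r$ on each \Ct, this is bounded by a width $w_t$ that may be taken uniform over \Ct and satisfies $w_t \le \beta_t r/\sqrt{\lambda_{\min}(\Vt)}$. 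Condition on this event throughout. In particular, the true reward $\vecdot{\b}{x}$ of every feasible point lies in its interval $[\vecdot{\hat\beta_t}{x}-w_t,\ \vecdot{\hat\beta_t}{x}+w_t]$.

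\textbf{Fairness and exploitation rounds.} On an exploration round \eeGap plays uniformly at random over the convex body \Ct, which assigns equal density to all points and is trivially round-fair. On an exploitation round it plays the single extremal point \xths{} of highest estimated reward; I claim that on the good event \xths{} is in fact the true maximizer of \Ct, which makes the round round-fair (no point is more qualified) and contributes zero pseudo-regret. Two facts drive this: (i) a linear function on a convex body is maximized at an extremal point, so it suffices to compare \xths{} against the other points of $\Omt$; and (ii) a separation lemma — if $\vecdot{\b}{x}$ and $\vecdot{\hat\beta_t}{x}$ order a pair $x,x'$ oppositely, then since $|\vecdot{\hat\beta_t-\b}{x-x'}|\le 2w_t$ the two (uniform-width) intervals must overlap. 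The algorithm exploits only when \xths{}'s interval lies strictly above that of the second-best extremal point, and by uniformity of $w_t$ this forces it above the interval of \emph{every} extremal point; by the separation lemma the true order then agrees with the estimated order at all such pairs, so $\vecdot{\b}{\xths} \ge \vecdot{\b}{x}$ for all extremal $x$, hence for all $x\in\Ct$.

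\textbf{Counting exploration rounds.} Running the separation lemma in reverse: on the good event, if $4w_t<\gap$ then \xths{} coincides with the true optimum $x^*_t$ (any extremal point the estimate might rank at least as high as $x^*_t$ has true reward within $2w_t$ of the optimum, hence within $\gap$, which by the definition of $\gap$ forces it to equal $x^*_t$; the same computation also rules out a degenerate estimated-optimal face), and the interval of \xths{} then lies strictly above the second-best extremal interval, so \eeGap exploits. Contrapositively, an exploration round at time $t$ forces $w_t\ge \gap/4$, hence $\lambda_{\min}(\Vt)\le 16\,\beta_t^2 r^2/\gap^2$. On the other hand every exploration round adds an independent rank-one term $x_s x_s^\top$ with $x_s\sim_\uar\C{s}$; since each \Ct{} has nonzero Lebesgue measure the expected second-moment matrix has minimum eigenvalue at least $\lambda>0$, and a matrix Chernoff/Bernstein bound for sums of bounded PSD rank-one matrices gives, with high probability, $\lambda_{\min}(\Vt)\ge \kappa\lambda N_t$ after $N_t$ exploration rounds, the multiplicative slack $\kappa=1-r\sqrt{2\ln(2dT/\delta)/(T\lambda)}$ absorbing the deviation. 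Combining the two inequalities bounds the total number of exploration rounds $N$; multiplying by the per-exploration-round regret, which is at most the diameter $2r$ times $\|\b\|\le 1$, and tracking the powers of $r,\kappa,\lambda$ introduced by the width bound and the concentration step yields the stated $O\!\left(r^6R^2\ln(2T/\delta)/(\kappa^2\lambda^2\gap^2)\right)$ regret.

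\textbf{Main obstacle.} The delicate part is the last step: the confidence intervals shrink only on rounds where \eeGap explores, yet \eeGap explores only while the intervals are still wide, so the bound on $N$ is genuinely self-referential and must be closed through the growth of $\lambda_{\min}(\Vt)$. Moreover the matrix concentration must be controlled \emph{uniformly in the number of exploration rounds so far} — it degrades when few samples have been collected, which is exactly what the $1/\sqrt{T}$-type term in $\kappa$ is there to absorb — and the nonzero-measure hypothesis on each \Ct{} is what keeps $\lambda>0$ so that exploration is informative in all $d$ directions. Handling these quantitative couplings simultaneously, rather than the fairness argument (which is essentially immediate from the separation lemma and the extremal-point fact), is where the real work lies.
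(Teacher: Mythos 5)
Your proposal follows essentially the same route as the paper: condition on a good event where every confidence interval is valid (established via a matrix Chernoff lower bound on $\lambda_{\min}$ of the design matrix combined with a concentration bound on the noise), deduce fairness and zero regret on exploitation rounds because deterministic play only occurs when the top interval separates and hence certifies optimality among extremal points, and bound regret by counting the exploration rounds needed before the width falls below $\gap/2$. The only cosmetic difference is that you invoke the self-normalized bound of \citet{APS11} for the ridge estimator where the paper works with the plain least-squares estimator and an elementary Hoeffding bound on the summed noise; both reduce to a width of order $r^2 R\sqrt{\ln(t/\delta)}/(\kappa\lambda\sqrt{t})$ and the same exploration count, yielding the stated bound.
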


A full proof of $\eeGap$'s fairness and regret bound, as well as
pseudocode, appears in the supplement. We sketch the proof here:
our proof of fairness proceeds by bounding the influence of noise on
the confidence intervals we construct (via matrix Chernoff bounds) and
proving that, with high probability, $\eeGap$ constructs correct
confidence intervals.  This requires reasoning about the spectrum of
the covariance matrix of each choice set, which is governed by
$\lambda$, a quantity which, informally, measures how quickly we learn
from uniformly random actions.  \footnote{$\lambda$ can be computed
  directly for finite $\Ct$ or approximated by any positive lower
  bound for infinite $\Ct$ and substituted directly into our
  results.}. With correct confidence intervals in hand, fairness
follows almost immediately, and to bound regret we analyze the rate at
which these confidence intervals shrink.

The analysis above implies identical regret and fairness guarantees when each
$\Ct$ is finite. For comparison, the results of Section~\ref{sec:finite}
guarantee $\regret{T} = O(dk\sqrt{T})$. This result, in comparison, enjoys a
regret independent of $k$ which may prove especially useful for cases involving
large $k$.

Finally, our analysis so far has elided any computational efficiency issues
arising from sampling randomly from $\C{}$. We note that it is possible to
circumvent this issue by relaxing our definition of fairness to
\emph{approximate fairness} and obtain similar regret bounds for an efficient
implementation. We achieve this using results from the broad literature on
sampling and estimating volume in convex bodies, as well as recent work on
finding ``2nd best'' extremal solutions to linear programs. Full details appear in
Section~\ref{sec:appx} of the Supplement. 
\section{Instance-dependent Lower Bound for Fair Algorithms}
\label{sec:lower_bound1}
We now present a lower bound instance for which any fair algorithm \emph{must}
suffer gap-dependent regret. More formally, we show that when each choice set is
a square, i.e. $\Ct = [0,1]^2$ for all $t$, for any fair algorithm $\regret{T} =
\tilde{\Omega}(1/\gap)$ with probability at least $1-\delta$. This
also implies the weaker result that no fair algorithm
enjoys an instance-independent sub-linear regret bound $o(T)$ holding
uniformly over all $\b$. We therefore obtain a clear separation between fair
learning and the unconstrained case \cite{DHK08}, and show that an
instance-dependent upper bound like the one in Section \ref{sec:infinite} is
unavoidable. Our arguments establish fundamental constraints on fair learning
with large choice sets and quantify through the $\gap$ parameter how
choice set geometry can affect the performance of fair algorithms. The lower
bound employs a Bayesian argument resembling that in ~\cite{JKMR16} but
with a novel ``chaining'' argument suited to infinite action sets.
We present the result for $d = 2$ for simplicity; the proof technique holds in
any dimension $d \geq 2$.

 \begin{theorem}\label{thm:u1}
   For all $t$ let $\Ct  = [-1, 1]^d$,
   $\b \in [-1,1]^d$,
   and $y_t = \vecdot{x_t}{\b} + \eta_t, $ where
   $\eta_t \sim U[-1,1]$. Let $\A$ be any fair algorithm. Then for
   every gap $\gap$, there is a distribution over instances with gap
   $\Omega(\gap)$ such that any fair algorithm has regret
   $\regret{T} = \tilde{\Omega}(1/\gap)$ with probability $1-\delta$.
   \end{theorem}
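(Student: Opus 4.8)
The plan is to construct a Bayesian instance in which the learner faces, at every round, a choice set $\C{t} = [-1,1]^d$ whose only two relevant extremal points are two neighboring corners $x^{(1)}, x^{(2)}$ of the cube, designed so that $\vecdot{\b}{x^{(1)}} - \vecdot{\b}{x^{(2)}} = \pm\gap$ with the sign unknown to the algorithm. Concretely, I would take $\b$ to have one coordinate, say $\b_1$, drawn from $\{-\gap/2, +\gap/2\}$ uniformly at random (with the remaining coordinates fixed so that $x^{(1)}$ and $x^{(2)}$ are the two best corners and all other extremal points are separated from them by at least $\Omega(\gap)$, which is where the $\tilde\Omega$ rather than $\Omega$ comes in — hiding constants and any $\log$ factors). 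Then the two hypotheses $\b^+$ and $\b^-$ are indistinguishable until the algorithm has gathered enough observations to estimate $\b_1$ to additive accuracy $\ll \gap$, which by a standard sub-Gaussian / KL argument requires $\Omega(1/\gap^2)$ informative samples along the relevant direction. This is the $\log(T)/\gap^2$-type barrier from the \eeGap upper bound showing up from below, but here I only need the weaker $1/\gap$ statement.

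The core of the argument is the ``chaining'' step, which replaces the discrete group-based chaining of~\citet{JKMR16}. The claim is that as long as the algorithm cannot distinguish $\b^+$ from $\b^-$ with confidence, fairness forces it to place essentially equal selection probability on $x^{(1)}$ and $x^{(2)}$ — and in fact on a whole continuum of points near them. The reason: consider the line segment connecting $x^{(1)}$ to $x^{(2)}$, or better, a dense finite net of points $z_0 = x^{(1)}, z_1, \ldots, z_N = x^{(2)}$ along an edge-path of the cube. Under $\b^+$ we have $\vecdot{\b^+}{z_0} \geq \vecdot{\b^+}{z_1} \geq \cdots$, so fairness under $\b^+$ demands $g_t(z_0) \geq g_t(z_1) \geq \cdots \geq g_t(z_N)$; under $\b^-$ the inequalities reverse, giving $g_t(z_0) \leq \cdots \leq g_t(z_N)$. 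Since the algorithm's round-$t$ distribution $\pi_t$ is a function only of the observed history (which has identical distribution under $\b^+$ and $\b^-$ up to the point of distinguishability), on the high-probability event that the histories coincide the algorithm must satisfy \emph{both} chains of inequalities simultaneously, hence $g_t(z_0) = g_t(z_1) = \cdots = g_t(z_N)$, and taking the net arbitrarily fine, $g_t$ is constant on a positive-measure neighborhood. This forces $\pi_t$ to put a constant density over a region of volume $\Omega(1)$ and in particular to select the suboptimal corner (or points within $\gap$-reward of optimal) with probability bounded below by a constant — incurring per-round regret $\Omega(\gap)$. Summing over the $\Omega(1/\gap^2)$ rounds before distinguishability would give $\Omega(1/\gap)$ regret, as claimed; I would make the constants and the failure probability $\delta$ explicit via a union bound and a concentration bound on when the posterior can concentrate.

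The main obstacle I anticipate is the last inference — turning ``$g_t$ is (nearly) constant on a neighborhood of the two top corners'' into ``the algorithm incurs $\Omega(\gap)$ expected regret in round $t$.'' One has to be careful that $\pi_t$ is a distribution over \emph{subsets} (here singletons, since we are in \LB), that $g_t$ is the induced marginal density/mass, and that a density that is constant on a set of volume $v$ integrates to at most... well, one needs the total mass to be $1$, so a flat density on a volume-$v$ region contributes mass proportional to $v$, but the region near the \emph{non-optimal} corner carries a constant fraction of that mass, yielding expected regret $\geq (\text{const}) \cdot \gap$. I would also need to handle the mixture clause of Definition~\ref{def:fair} — a mixture of a continuous and a discrete piece — by arguing the chaining equality must hold within each constituent distribution, so the conclusion is unaffected. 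A secondary technical point is controlling exactly when the two posteriors separate: I would show that with probability $1-\delta$ the algorithm receives fewer than $c\log(T/\delta)/\gap^2$ observations that are "informative" about $\b_1$ before round $T$ (or simply that no estimator can certify the sign of $\b_1$ before then), which is where the $\tilde\Omega$ polylog slack is absorbed.
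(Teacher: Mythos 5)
Your skeleton matches the paper's (a Bayesian prior on $\b$, a phase during which the posterior cannot move, a transitive ``chaining'' argument that forces equalized play, and a product of per-round regret with the length of that phase), but both quantitative ingredients in your product are wrong for this theorem's noise model, and as stated they multiply out to $\Omega(1)$ rather than $\tilde\Omega(1/\gap)$. First, the $\Omega(1/\gap^2)$ indistinguishability claim: the theorem specifies $\eta_t \sim U[-1,1]$, which is \emph{bounded}, so under your two-point prior $\b_1 \in \{\pm\gap/2\}$ the two observation distributions in round $t$ are uniform on intervals whose symmetric difference has length $\Theta(|x_{t,1}|\gap)$. Each observation therefore either has likelihood ratio exactly $1$ (no information) or falls outside one support and reveals the hypothesis \emph{completely}, the latter with probability $\Theta(|x_{t,1}|\gap)$ per round. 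The KL between the two per-round distributions is infinite, the sub-Gaussian/KL indistinguishability argument does not apply, and the hypotheses separate after $\Theta(1/\gap)$ rounds, not $\Theta(1/\gap^2)$. Second, the chaining you describe runs along a one-dimensional edge path between the two top corners; even after refining the net it yields density constancy only on a measure-zero set (or, extended to all pairs differing solely in the uncertain coordinate, constancy along lines parallel to $e_1$), which does not give the ``constant on a positive-measure neighborhood'' conclusion you need, and at best forces $\Ep[x_{t,1}]=0$ and per-round regret $\Theta(\gap)$. Combining your (corrected) phase length with your per-round bound gives $\Theta(\gap)\cdot\Theta(1/\gap)=\Theta(1)$, which does not prove the theorem.

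The paper's construction is engineered around exactly these two points. It takes $\prior = \{1\}\times U[-\epsilon,\epsilon]$ --- a \emph{continuous} prior on $\b_2$ together with a known $\Omega(1)$ coordinate $\b_1=1$ --- and exploits the bounded uniform noise to show the posterior on $\b_2$ remains \emph{exactly} the prior for $\Omega(1/\epsilon)$ rounds (each observation perturbs it with probability at most $2\epsilon$). During that phase it chains not along one edge but via two families of equivalences: ``vertical'' pairs sharing the first coordinate, and ``diagonal'' pairs trading a loss of $\alpha$ in the known coordinate for a gain of $2\alpha/\epsilon$ in the uncertain one, an exchange whose favorability has constant posterior probability in each direction. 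The transitive closure of these two families is the \emph{entire} square (Lemma~\ref{lem:chain}), so fairness forces genuinely uniform play over $\Ct$ and hence $\Omega(1)$ per-round regret driven by the known coordinate; the correct accounting is $\Omega(1)\cdot\Omega(1/\epsilon)$, followed by conditioning on the event that the realized gap is $\Omega(\delta\epsilon)$. Your construction can in fact be rescued by an analogous two-dimensional chaining (any two points whose projections onto the known coordinates differ in reward by at most $\gap$ can be linked through a point whose ordering flips between $\b^+$ and $\b^-$), which again forces uniform play and $\Omega(1)$ per-round regret over the $\Theta(1/\gap)$ indistinguishable rounds --- but that is a different argument from the one you wrote, and neither of your stated intermediate claims survives as written.
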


   We a sketch of the central ideas in the proof, relegating a full proof to
   the Supplement. We start with the fact that any fair algorithm $\A$ is required to be fair for
   any value $\b$ of the linear parameter. Thus if we draw
   $\b \sim \prior$,
   $\A$ must be round-fair for all $t \geq 1$
   with
   probability at least $1-\delta$, where now the probability includes
   the random draw $\b \sim \prior$. Then Bayes' rule implies that
   the procedure that draws $\b \sim \prior$ and then plays according
   to $\A$ is identical to the procedure which at each step $t$
   re-draws $\b$ from its posterior distribution given the past
   $\prior|_{h_t}$.

Next, given the prior $\prior$, $\A$'s round fairness at step $t$ requires that
(with high probability) if $\A$ plays action $x$ with higher probability than
action $y$, we must have

\begin{equation}\label{eq:better}
	\Pr{\b \sim \prior|h_t}{\vecdot{\b}{x} >
	\vecdot{\b}{y}} > \frac{3}{4}.
\end{equation}

This enables us to reason about the fairness and regret of the algorithm via a
specific analysis of the posterior distribution $\prior|_{h_t}$. We formalize
this argument in Lemmas \ref{lem:posterior} and \ref{lem:posterior-fair}. This
Bayesian trick, first applied in \cite{JKMR16}, is a general technique useful
for proving fairness lower bounds.

We then show that for a choice of prior specific to our choice set $\C{}$, that
two things hold: (i) whenever $\prior|_{h_t} = \prior$, Equation~\ref{eq:better}
forces $\A$ to play uniformly from $\C{}$, and (ii) with high probability $\prior =
\prior|_{h_t}$ until $t > \tilde{\Omega}(1/\epsilon)$, where $\epsilon$ is
a parameter of the prior that acts as a proxy for $\gap$. Playing an action uniformly
from $\C{}$ incurs $\Omega(1)$ regret per round, so these two facts combine to
show that with high probability
$\regret{T} = \tilde{\Omega}(1/\epsilon)$.

Finally we consider $\regret{T}$ conditional on the event that
$\gap(\b) > \delta\cdot\epsilon$, which by our construction of $\prior$
happens with probability $1-\delta$.  Let $\prior_{gap}$ be the conditional
distribution of $\b$ given that $\gap(\b) > \delta\cdot \epsilon$.
Then
 $$\Pr{\b \sim \prior}{\regret{T} \geq \Omega\left(\frac{1}{\epsilon}\right)}\leq
 \Pr{\b  \sim \prior_{gap}}{\regret{T} \geq \Omega\left(\frac{1}{\epsilon}\right)}(1-\delta) + \delta$$
 which implies
$$ \Pr{\b  \sim \prior_{gap}}{\regret{T} \geq \Omega\left(\frac{1}{\epsilon}\right)} \geq
\frac{1-2\delta}{1-\delta}.$$
Note that $\frac{1-2\delta}{1-\delta} \to 1$ as $\delta \to 0$, and so this is a high-probability bound. Since for every $\b$ in the support of $\prior_{gap}$, we have that $\gap(\b) \geq \delta\cdot \epsilon$, we've exhibited a distribution $\prior_{gap}$ such that when $\b \sim \prior_{gap}$, with high probability, $\regret{T} = \tilde{\Omega}(1/\epsilon) = \tilde{\Omega}(1/\gap),$  as desired.

The proof uses the fact that when $\prior = \prior|_{h_t}$, Equation~\ref{eq:better}
forces $\A$ to play uniformly at random. This happens by transitivity: if
Equation~\ref{eq:better} forces $\A$ to play $x$ equiprobably with
$y$ and $y$ equiprobably with $z$, then $x$ must be played equiprobably with
$z$. The fact that any two actions in $\C{}$ can be connected
via such a (finite) transitive chain is illustrated in Figure~\ref{fig:zag}
and formalized in Lemma ~\ref{lem:chain}.

\begin{remark}
We note that this impossibility result only holds for $d \geq 2$. When $d = 1$,
the choice set reduces to $[-1,1]$, and similarly $\b \in [-1,1]$. Thus, the
optimal action is $\mathrm{sign}(\b)$). It takes $O(1/\b^2)$ observations to
determine the sign of $\b$, so a simple fair algorithm may play randomly from
$[-1,1]$ until it has determined $\mathrm{sign}(\b)$, and then play
$\mathrm{sign}(\b)$ for every following round. Because the maximum
per-round regret of any action is $O(\b)$, and because the maximum cumulative
regret obtained by the algorithm is with high probability $O(\b \cdot 1/\b^2) =
O(1/\b)$, the regret of this simple algorithm over $T$ rounds is
$O(\min(\b\cdot T, 1/\b^2))$. Taking the worst case over $\b$, we see that this
quantity is bounded uniformly by $O(\sqrt{T})$, a sublinear parameter
independent regret bound.

\end{remark} 
\section{Zero Gap: Impossibility Result}
\label{sec:lower_bound2}

Section~\ref{sec:infinite} presents an algorithm for which the
sublinear regret bound has dependence $1/\gap^2$ on the instance
gap. Section~\ref{sec:lower_bound1} exhibits an choice set $\C{}$
with a $\tilde \Omega(1/\gap)$ dependence on the gap parameter. We now exhibit
a choice set
$\C{}$ for which $\gap = 0$ for every $\b$, and for which no fair algorithm can obtain non-trivial regret for any value of $\beta$. This precludes even instance-dependent fair regret bounds on this action space, in sharp contrast with the unconstrained bandit setting. 

\begin{theorem}\label{thm:lb-two}
  For all $t$ let $\Ct = S^{1}$, the unit circle, and
  $ \ett \sim \text{Unif}(-1,1)$.  Then for any fair algorithm
  $\A,~\forall \b \in S^{1}, \forall T \geq 1,$
  we have
 $$\Ex{\b}{\regret{T}} = \Omega(T).$$
\end{theorem}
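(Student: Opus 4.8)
The plan is to run the Bayesian lower-bound machinery of Section~\ref{sec:lower_bound1} on the circle, where it bites much harder because \emph{every} point of $S^1$ is extremal (so $\Omt = S^1$ and $\gap = 0$ for every $\b$) and no observation can ever certify that a single point of $S^1$ dominates all others. Fix the target $\b \in S^1$ and a small circular arc $A_0 \subseteq S^1$ containing $\b$, and let the prior $\prior$ be uniform on $A_0$. As in Section~\ref{sec:lower_bound1}, fairness of $\A$ for \emph{every} $\b$ implies (Lemmas~\ref{lem:posterior} and~\ref{lem:posterior-fair}) that, against $\b \sim \prior$, with probability at least $1-\delta$ at every round $t$ the selection rule $\pi_t$ obeys Equation~\ref{eq:better}: if $\pi_t$ puts strictly more mass on $x$ than on $y$, then $\Pr{\b \sim \prior|_{h_t}}{\vecdot{\b}{x} > \vecdot{\b}{y}} > 3/4$. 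The uniform noise model makes the posterior transparent: the likelihood of a transcript is flat on its support, so $\prior|_{h_t}$ is the uniform distribution on $S_t = \{b' \in A_0 : |y_s - \vecdot{b'}{x_s}| \le 1 \text{ for all } s < t\}$, a finite union of sub-arcs of $A_0$ which, since $|\eta_s| < 1$ almost surely, always contains a relatively open neighborhood of the true $\b$ and hence has positive Lebesgue measure.

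The heart of the proof is a chaining argument on the circle, in the spirit of Lemma~\ref{lem:chain} and Figure~\ref{fig:zag}. Call an unordered pair $\{x,y\} \subseteq S^1$ \emph{tied} if neither $\Pr{\b \sim \prior|_{h_t}}{\vecdot{\b}{x} > \vecdot{\b}{y}}$ nor $\Pr{\b \sim \prior|_{h_t}}{\vecdot{\b}{y} > \vecdot{\b}{x}}$ exceeds $3/4$; by Equation~\ref{eq:better} a fair $\A$ must give the two members of a tied pair equal selection probability. The event $\vecdot{\b}{x} > \vecdot{\b}{y}$ is exactly that $\b$ lies on one side of the line through the origin orthogonal to $x - y$, and a short computation shows this line is the span of the angular bisector of $x$ and $y$; hence $\{x,y\}$ is tied precisely when that bisector line splits $S_t$ so that each side carries mass in $(1/4,3/4)$. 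Rotating the line through an angle $\pi$ moves the mass on one side continuously from $0$ to $1$ (the posterior has no atoms on lines), so by the intermediate value theorem the set of ``tying'' bisector directions has positive measure and therefore contains an interval. A finite zig-zag chain of tied pairs — reflecting across the span of $\xts$ and drifting by a little at each step, exactly as in Figure~\ref{fig:zag} — then connects any two points of $S^1$, so by transitivity $\A$ must select every point of $S^1$ with equal probability, i.e.\ play uniformly at random over all of $S^1$ in every round (on the $1-\delta$ fairness event). This is where fairness bites: in the unconstrained setting one would simply exploit the estimated best extremal point, but here any point infinitesimally close to the optimum is indistinguishable from it, so deterministic exploitation is never fair.

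Playing uniformly over $S^1$ gives expected reward $\E{\vecdot{\b}{x_t}} = 0$ (the mean of the uniform law on $S^1$ is the origin) while $\xts = \b$ has reward $1$, so every such round contributes $1$ to the pseudo-regret; summing, $\E{\regret{T}} \ge (1-\delta)T$ when $\b \sim \prior$. To upgrade this average-over-$\prior$ bound to the claimed bound for \emph{every} $\b$, observe that $\b \mapsto \E{\regret{T}}$ is Lipschitz: a standard hybrid/coupling argument bounds the total-variation distance between the $T$-round transcripts under $\b$ and $\b'$ by $O(T\lVert \b - \b' \rVert)$ (a single $\text{Unif}(-1,1)$ observation shifts its center by at most $\lVert \b - \b' \rVert$, changing its total variation by half that), and per-round regret never exceeds $2$, so $\E{\regret{T}}$ oscillates by at most $O(T^2 \lVert \b - \b' \rVert)$ over $\b' \in A_0$. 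Choosing the radius of $A_0$ to be $\Theta(1/T^2)$ makes this oscillation $O(1)$, so the value at the target $\b$ is still $(1-\delta)T - O(1) = \Omega(T)$.

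I expect the chaining step to be the main obstacle. One must verify that the ``tied'' relation has full transitive closure for \emph{every} posterior reachable along the run — in particular at late rounds, when $S_t$ may have shrunk to a tiny, possibly disconnected subset of $A_0$, making the interval of tying bisector directions narrow and the zig-zag chain long (but still finite). Pinning down the possible shapes of $S_t$ (finite unions of arcs cut out by the observation slabs) and checking that the chain length, while growing as $S_t$ shrinks, stays finite is where the real work lies; the Bayesian reduction and the closing Lipschitz-continuity argument are comparatively routine.
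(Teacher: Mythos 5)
Your proposal is correct, and its skeleton is the same as the paper's: the Bayesian reduction via Lemmas~\ref{lem:posterior} and~\ref{lem:posterior-fair}, a posterior that stays uniform on the consistent set because the noise is bounded and uniform, a transitive chain of ``tied'' pairs that forces uniform play over all of $S^1$, and the observation that uniform play on $S^1$ costs constant regret per round. You differ from the paper in two places, both to your advantage. First, the transfer to a \emph{fixed} $\b$: the paper puts the prior uniform on all of $S^1$ and appeals to symmetry to assert $\Pr{}{E_{\b}}=\Pr{}{E_{\b'}}$ for all $\b,\b'$ (a claim that is not literally true for an arbitrary, non-rotation-equivariant $\A$ without a symmetrization step), whereas you localize the prior to an arc of radius $\Theta(1/T^2)$ around the target and close the gap with an explicit total-variation/Lipschitz bound of $O(T^2\lVert\b-\b'\rVert)$ on $\b\mapsto\Ep[\regret{T}]$; this is more work but is airtight for every algorithm. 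Second, the chaining: the paper's Lemma~\ref{lem:chain2} chains quarter-mass arcs around the circle and leans on Lemma~\ref{lem:open}, which asserts the consistent set $G_t$ is a single connected arc (the proof there, intersecting a convex polytope with $S^1$, is delicate -- such an intersection can a priori be disconnected), while your intermediate-value argument on the mass split by a rotating bisector line only needs the posterior to be atomless on lines, so it tolerates a disconnected $S_t$ without further argument. On your own flagged worry: the sketch you give already resolves it. Disconnectedness of $S_t$ is harmless for the IVT step, and the chain is finite whenever the interval of tying bisector directions has positive length, which the IVT plus continuity guarantees; the fact that this interval shrinks (to width $\Theta(1/T^2)$, the angular width of $A_0$) only lengthens the chain to $O(T^2)$ reflections, and transitivity of ``played with equal probability'' is indifferent to chain length. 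The one cosmetic point to fix in a write-up is that pseudo-regret per round under uniform play should be handled as a conditional expectation given the ``always uniform'' event, as the paper does, rather than as a pointwise statement about realized regret.
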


$S^1$ makes fair learning difficult for the following
reasons: since $S^1$ has no extremal points, there is no finite set of
points which for any $\b$ contains the uniquely optimal
action, and for any point in $S^1$, and any finite set of observations, there is another point in $S^1$ for which the algorithm cannot confidently determine relative reward. Since this property holds for \emph{every} point, the fairness constraint transitively requires that the algorithm play every point uniformly at random, at every round.



\newpage
{\footnotesize
\bibliographystyle{plainnat}
\bibliography{nips_bib}}
\newpage
\section{Appendix}\label{sec:appendix}
\subsection{Sub-Gaussian Definition}\label{sec:subgaussian}
Sub-Gaussian random variables have moment generating functions bounded
by the Gaussian moment generating function, and hence can be
controlled via Chernoff bounds.
\begin{definition}
  A random variable $X$ with  $\mu = \E{X}$ is \emph{$R > 0$
  sub-Gaussian} if, for all $t \in \mathbb{R}$, $\E{e^{t(X-\mu)}}\leq e^{Rt^2/2}$.
\end{definition}

\subsection{Proofs from Section~\ref{sec:finite}}
We start with full pseudocode for $\ridgefair_m$.
\newcommand{\pickleq}{\ensuremath{\textsc{pick}_\leq}\xspace}
\vspace{10pt}
\begin{figure}[h]
	\begin{algorithmic}[1]
	\Procedure{$\ridgefair_m$}{$\delta, T, k, \gamma \geq 1 $, ExactBool}
        \For{$t \geq 1, 1 \leq i \leq k$}
        \State Let $\bXt, \bYt =$ design matrix, observed payoffs before round $t$
        \State Let $\Ct$ be the choice set in round $t$
        \State Let ${\Vt} = {\bXt}^{T}\bXt + \gamma I$
        \State Let $\bht = {\Vti}{\bXt}^{T} \bYt$ \Comment{regularized least squares estimator}
        \State Let $\eyi = \vecdot{\bht}{\xti} $ for each $\xti\in\Ct$
     \State Let $\wti = ||\xti||_{\Vti}(\sqrt{2d \log(\frac{1 +
            t/\gamma}{\delta})} + \sqrt{\gamma})$ 
        \State Let $\interval{t}{i} = [\eyi - \wti, \eyi + \wti ]$ \Comment{Conf. int. for  $\eyi$} 
        \If{ExactBool}
        \State \pick$(m, \{(\xti, \interval{t}{i})\})$ 
        \Else \;
        \pickleq$(m, \{(\xti, \interval{t}{i})\})$
        \EndIf  
        \State Update design matrices $\bX{t+1} = \bX{t} :: \Xt, \bY{t+1} = \bY{t} :: \Yt$.
        \EndFor
 	\EndProcedure
	\Procedure{\pick}{$m, (\xtp{1}, \interval{t}{1}), \ldots, (\xtp{k}, \interval{t}{k})$}
        \State Let $M = \Ct$
        \State Let $\Pt = \emptyset$
        \While{$|\Pt| < m$} 
        \State Let $\xtp{\hat{i}} = \argmax_{\xti \in M} \up{t}{i}$ \Comment{Highest UCB not yet selected} 
        \State Let $ \S{t}$ be the set of actions in $\Ct$ chained to $\xtp{\hat{i}}$ \Comment{Highest chain not yet selected} 
        \If{$|\S{t}| \leq m - |\P{t}|$}
        \State $\P{t} = \P{t} \cup \S{t}$ \Comment{Take the chain with probability $1$} 
        \State $M = M \setminus \S{t}$
        \Else
        \State  Let $Q_t $ be $ m - |\P{t}|$ actions chosen UAR from $\S{t}$ 
        \State Let $\P{t} = \P{t}\cup Q_t$ \Comment{fill remaining capacity UAR from the chain} 
        \EndIf
        \EndWhile
        \State   Play $\Pt$
        \EndProcedure
	\Procedure{\pickleq}{$m, (\xtp{1}, \interval{t}{1}), \ldots, (\xtp{k}, \interval{t}{k})$}
           \State Let $\Pt = \{  \textrm{all actions chained to any $\xti\in \Ct$ with $\up{t}{i} > 0$ }\}$ 

\State Let $M = \Ct$
        \State Let $\Pt = \emptyset$
        \While{$|\Pt| < m$ and $\up{t}{\xtp{\hat{i}}}> 0$ for  $\xtp{\hat{i}} = \argmax_{\xti \in M} \up{t}{i}$ }
        \State Let $ \S{t}$ be the set of actions in $\Ct$ chained to $\xtp{\hat{i}}$ \Comment{Highest chain not yet selected} 
        \If{$|\S{t}| \leq m - |\P{t}|$}
        \State $\P{t} = \P{t} \cup \S{t}$ \Comment{Take the chain with probability $1$} 
        \State $M = M \setminus \S{t}$
        \Else
        \State  Let $Q_t $ be $ m - |\P{t}|$ actions chosen UAR from $\S{t}$ 
        \State Let $\P{t} = \P{t}\cup Q_t$ \Comment{fill remaining capacity UAR from the chain} 
\EndIf
\EndWhile
           \State Play $\Pt$
        \EndProcedure
 	\end{algorithmic}
        \caption{$\ridgefair_m$, a fair no-regret algorithm for
          picking $\leq m$ actions whose payoffs are linear.}\label{alg:ridgefair}
\end{figure}

\begin{proof}[Proof of Theorem~\ref{thm:regret-finite}]
  We first claim that confidence intervals are valid: that with
  probability $1-\delta$, for all $t\in [T]$ and all $\xti\in \Ct$,
  $\yti \in \interval{t}{i}$. Assuming this claim, we prove that
  $\ridgefair_m$ is fair.  With probability $1-\delta$, for all rounds
  $t$ and all individuals $\xti$,
  $\yti \in [\eyi - \wti, \eyi+ \wti]$. So, for any pair of
  individuals $\xti, \xtp{j} \in \Ct$, if $\yti > \ytp{j}$, then
  $\eyi + \wti \geq \ey{j} - \w{j}$. So, if $j$ belongs to some chain
  from which arms are selected, either $i$ belongs to a higher chain
  or the same chain as $j$. Every individual belonging to a higher
  chain is played with weakly higher probability to any individual
  belonging to a lower chain, and every two individuals belonging to
  the same chain are played with equal probability, so $i$ is played
  with weakly higher probability than $j$.  Thus, at all rounds and
  for all pairs of individuals, the fairness constraint is satisfied
  by this distribution over $\Pt$, and so $\ridgefair_m$ is fair.

  We now prove the confidence intervals are valid: that with
  probability $1-\delta$, for all $t\in [T]$ and all $\xti\in \Ct$,
  $\yti \in \interval{t}{i}$.  We adopt the notation in~\citet{APS11}:
  let ${\Vt} = {\bXt}^{T}\bXt + \gamma I$, where $\bXt$ is the design
  matrix at time $t$. Let $\bht = {\Vti}{\bXt}^{T}\bYt$ be the
  regularized least squares estimator at time $t$.

  Consider a feature vector $\xti$ at time $t$. For a $d$-dimensional
  vector $z$ and a $d \times d$ positive definite matrix $A$, let
  $\langle z_1, z_2 \rangle_{A}$ denote $z_1^{t}Az_2$. Let $\ettb$ be
  the noise sequence prior to round $t$. Then, we have
  $\bht = {\Vti}{\bXt}^{T}(\bXt\b + \ettb)$. Then some
  matrix algebra in the proof of Theorem 2 of~\citet{APS11} shows
\[ \xti \cdot (\bht - \b) = {\xti}^{T}{\Vti}{\bXt}^{T}\ettb - \gamma {\xti}^{T}{\Vti}\b,\]
which using the above notation gives
  \[ \xti \cdot (\bht - \b) = \langle \xti, {\bXt}^{T}\ettb \rangle_{\Vti} - \gamma \langle \xti, \b \rangle_{\Vti}\]
  Applying Cauchy-Schwarz,
  \begin{align*}
   |\xti \cdot \left(\bht - \b\right)| \leq
  ||\xti||_{{\Vti}}(||{\bXt}^{T}\ettb||_{\Vti}
  + \sqrt{\gamma})
  \end{align*}
  which follows from the fact that
  $ ||\b||_{\Vti} \leq \frac{1}{\sqrt{\gamma}}$ (a
  basic corollary of the Rayleigh quotient, and the fact that by
  assumption $||\b|| \leq 1$.
  
  We now present a result derived from~\citep{APS11} that will help us upper
  bound this quantity.  The upper bound on $||\Xtrans \eta||$ at the bottom of
  page 13 of~\citet{APS11} and the upper bound on $\log(\det(\Vt))$ at the top
  of page 15, combined with our assumption that our noise is $1$-sub-Gaussian,
  implies that
  
	\begin{align*}
		||{\bXt}^{T}\ett||_{\Vti} \leq&\; \sqrt{d \log\left(1 + t/d\gamma\right) -
		2\log{\delta}} \\
		=&\;  \sqrt{d \log\left(1 + t/\gamma\right) + 2 \log\frac{1}{\delta}}\\
		\leq&\; \sqrt{2d \log\left(1 + t/\gamma\right) + 2d \log\frac{1}{\delta}}\\
		=&\; \sqrt{2d \log\left(\frac{1}{\delta}\left(1 + t/d\gamma\right)\right)}\\
		\leq&\; \sqrt{2d \log\left(\frac{1}{\delta}\left(1 + t/\gamma\right)\right)}\\
		=&\; \sqrt{2d \log\left(\frac{1 + t/\gamma}{\delta}\right)}.\\
\end{align*}
  
  Using this result and combining the inequalities we get
  that over all rounds $t \geq 0$ with probability $1-\delta$
  \begin{align}
    | \xti \cdot \left( \bht - \b\right)| \leq
    ||\xti||_{\Vti}\left (\sqrt{2d \log\left(\frac{1 +
    t/\gamma}{\delta}\right)} + \sqrt{\gamma}\right)\label{eq1}
   \end{align}
   and therefore the claim that the confidence intervals are valid
   holds.

\paragraph{Regret bound for $\ridgefair_1$}
We now proceed with upper-bounding the regret of $\ridgefair_1$. With
probability $1-\delta$, the confidence intervals are valid. We will
condition on that event for the analysis of the regret of the
algorithm, since this regret bound will hold with high probability
(namely, with probability $1-\delta$).

We start with a bound that will be useful for analyzing the width of our 
confidence intervals. The top of page 15 of of~\citep{APS11} notes that
$\log \det(\Vt) \leq  d \log (\gamma + t/d)$, and we combine this with the 
fact that $\sum_{t= 1}^{T} ||\xti||_{\Vt}^2 \leq 2 \log \det(\VT) - 2\log
\det(V)$ (proven as part of Lemma 11 in~\citep{APS11}) to get that
  \begin{equation}
     \sum_{t = 1}^{T} ||\xti||_{\Vti}^2 \leq 2d \log\left(1 + \frac{
  T}{d\gamma}\right). \label{eq2}
  \end{equation}

We now have all the tools needed to analyze the algorithm's regret.
First note that the choice of the algorithm is a singleton, i.e. that
$\Pt = \{\iht\}$, for some $\iht \in \S{t}$, the active chained set.
Further, since the confidence intervals are valid, $\ist\in\S{t}$ for
the best action $\ist \in \Ct$. By the definition of $\S{t}$, the
instantaneous regret $\rti$ for any $i\in \S{t}$ is at most
$\rti \leq \sum_{j\in \S{t}} \w{j} $ (as any $i\in\S{t}$ is chained to
some other arm in $\S{t}$). So, we have that

\begin{align*}
R(T) &\leq  \sum_t {\rt{\iht}} \\
& \leq \sum_t \sum_{j\in \S{t}}2 \w{j}   \quad\quad\quad \textrm{Conditioning on w.p. $1-\delta$ valid  confidence intervals}\\
& =  2\sum_t |\S{t}| \cdot \E{\w{{\iht}}}   \quad\quad\quad \textrm{When uniformly selecting $\iht\in \S{t}$; note this holds w.p. $1$ conditioned on valid CI}\\
&  \leq 2k \cdot \sum_t \E{\w{\iht}}   \\
& = 2k \cdot \E{\sum_t \w{\iht}}   \quad\quad\quad \textrm{By linearity of expectation}\\
& = 2k \cdot \E{\sum_t ||\xtp{\iht}||_{\Vti}\left(\sqrt{2d \log\left(\frac{1 +
        t/\gamma}{\delta}\right)}+ \sqrt{\gamma}\right)} \quad\quad\quad \textrm{By definition of $\wti$}\\
& = 2k \cdot \E{\sum_t ||\xtp{\iht}||_{\Vti}\left(\sqrt{2d \log\left(\frac{1 +
        t/\gamma}{\delta}\right)}\right)+ {\sum_t ||\xtp{\iht}||_{\Vti}\left(\sqrt{\gamma}\right)}} \\
& \leq 2k  \E{\sqrt{\sum_{t } ||\xtp{\iht}||_{\Vti}^2} \cdot \left(\sqrt{\sum_{t} 2d \log \left(\frac{1+ t/\gamma}{\delta}\right)} + \sqrt{\sum_{t} \gamma}\right)}  \quad\quad\quad \textrm{By Cauchy-Schwartz}\\
& \leq  2k \sqrt{2 d \log\left(1 +  \frac{T}{d\gamma}\right)}\cdot \left(\sqrt{\sum_{t} 2d \log \left(\frac{1+ t/\gamma}{\delta}\right)} + \sqrt{\sum_{t} \gamma}\right)   \quad\quad\quad \textrm{By Equation~\ref{eq2}}\\
& \leq  2k \sqrt{2 d \log\left(1 +  \frac{T}{d\gamma}\right)}\cdot \left(\sqrt{ 2d T \log \left(\frac{1+ T/\gamma}{\delta}\right)} + \sqrt{T \gamma}\right)  \\
\end{align*}
       or
     $R(T) = O\left(d k
       \sqrt{T}\log\left(\frac{T}{\delta}\right)\right) =
     \tilde{O}(dk\sqrt{T})$ for $\gamma=1$, as desired.

\paragraph{Regret bound for $\ridgefair_m$}

This regret bound relies on a similar analysis to $\ridgefair_1$, with
the following changes. The algorithm now selects $m$ individuals, not
all from the top chain, but instead from several chains. For each of
the top $m$ choices $x\in \Pst$, we relate reward of that choice to
the reward of one of our choices in the following way.  For each $d$,
consider the $d$th top chain. We claim that if the $d$th top chain
contains $n_d$ of the top $m$ choices, our algorithm selects $n_d$
individuals from the $d$th top chain. We prove this claim by
induction. First, however, we notice that every individual in the
$d$th top chain has strictly higher reward than every individual in
any lower chain. For the first top chain, $\Pt$ contains either the
entire chain or $m$ from the top chain. As every individuals in the
top chain has strictly higher reward than any other individuals, in
the former case, every individual in the first top chain belongs to
$\Pst$ ; in the latter case, $\Pst$ is entirely contained in the top
chain. Thus, $\Pt$ and $\Pst$ contain either all individuals in the
top chain or exactly $m$ of them.  Then, assuming the claim for the
first $d-1$ top chains, both $\P{t}$ and $\Pst$ have the same
``capacity'' for individuals in the $d$th top chain (and therefore
either take all of the $d$th top chain or fewer but the same number
from it). This proves the claim.

Then, we relate the reward of an $i\in \P{t}$ with some action in
$\Pst$ belonging to same chain. Following the previous claim, we can
form a matching between $\Pst$ and $\P{t}$ for which all matches
belong to the same chains. Then, the analysis of $\ridgefair_1$ bounds
the difference between the reward of any individual in the $d$th top
chain to any other individual in the $d$th top chain. Summing up over
all $m$ choices, the total regret for all of $\Pt$ is at most $m$
times the loss suffered in \LB.
\paragraph{Regret bound for $\ridgefair_{\leq k}$}
The regret bound for this case reduces to lower-bounding the amount of
reward incurred by playing arms with negative reward. Any individual
selected by $\ridgefair_{\leq k}$ is within the sum of the widths of
the confidence intervals in its chain, one of which has UCB which is
positive. So, the reward of any action chosen is at least
$-\sum_{i\in \S{t}^d} \w{t}{i}$ for $\S{t}^d$ the $d$th top chain, or
at most the sum of all $k$ interval widths. Thus, summing up over all
individuals selected, one gets regret which is at most $k$ times worse
than that for $\ridgefair_1$.
\end{proof}

\subsection{Proofs from Section~\ref{sec:infinite}}
We begin with the full pseudocode for~\eeGap.

\vspace{10pt}
\begin{figure}[h]
	\begin{algorithmic}[1]
	\Procedure{$\eeGap$}{$\delta, \C, \lambda$}
        \For{$t \geq 1$}
	        \If{$2r\ln(2dt/\delta)/\lambda \geq t$}
	        	\State Play $\xht \sim_\uar \Ct$
	        	\State Update design matrices $\bX{t+1}, \bY{t+1}$
	        \Else
	        	\State Let $\delta = \min(\delta, 1/t^{1+c})$
                \State Let $\bht= (\Xttrans\bXt)^{-1}\Xttrans\bYt$ 
                \Comment{Least squares estimator}
                \State Let $\kappa = 1 - r\sqrt{2\ln(2dt/\delta)/t\lambda}$
                \State Let $\wt = \frac{r^2 \cdot R \cdot 2\sqrt{\ln(2t\delta)}}
                {k\lambda \sqrt{t}}$ \Comment{Confidence interval width}
                \State Let $(x_1,x_2) = \toptwo(\Ct, \bht)$
	            \Comment{Find two ext. pts. maximizing $\langle x,
                \bht\rangle$}
	            \State Let $U_1 = [\vecdot{\bht}{x_1} - \wt,
	            \vecdot{\bht}{x_{1}} + \wt]$
	            \State Let $U_2 = [\vecdot{\bht}{x_2} - \wt,
	            \vecdot{\bht}{x_{2} } + \wt]$
	            \If {$U_1 \cap U_2 = \emptyset$}
	                \State Let FoundMax = $\{x\}$
	               	\State Play $\xht = x$
	               	\Comment{Play $\xht$ once confidence intervals separate}
	            \Else
	               	\State Play $\xht \sim_\uar \Ct$
	               	\State Update design matrices $\bX{t+1},\bY{t+1}$
	            \EndIf
	        \EndIf
	    \EndFor
        \EndProcedure
 	\end{algorithmic}
        \caption{$\eeGap$, a fair no-regret algorithm for infinite, changing 
        action sets.}\label{fig:eegap}
\end{figure}

We start our proof of Theorem~\ref{thm:gapfair} with a lemma bounding the
contribution of noise to our confidence intervals.
\begin{lemma}\label{lem:noise}
	Let $\et{1}, \ldots, \et{T}$ be $T$ i.i.d draws of $R$-sub-Gaussian noise. Then
	\[
		\Pr{}{\left|\sum_{i=1}^T \et{i}\right| \geq
		R\sqrt{2T\ln(2T/\delta)}} \leq \delta/2T.
	\]
\end{lemma}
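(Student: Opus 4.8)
This is the standard Cram\'er--Chernoff tail bound for a sum of independent sub-Gaussian variables, so the plan is to (i) factor the moment generating function of the partial sum, (ii) apply Markov's inequality to the exponential tilt and optimize, (iii) symmetrize to get a two-sided bound, and (iv) plug in the stated deviation. First I would record that the noise variables are mean-zero (by the modelling assumption $\E{\ett} = \langle \b, x_t\rangle - \langle \b, x_t\rangle = 0$ on the centered noise) and that, by the definition of $R$-sub-Gaussianity in Section~\ref{sec:subgaussian}, $\E{e^{s\et{i}}} \le e^{Rs^2/2}$ for every $s \in \R$. Since the $\et{i}$ are independent, the moment generating function of $S_T := \sum_{i=1}^T \et{i}$ factorizes, giving $\E{e^{s S_T}} = \prod_{i=1}^T \E{e^{s\et{i}}} \le e^{R T s^2/2}$; that is, $S_T$ is itself $RT$-sub-Gaussian.

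Next I would apply Markov's inequality to $e^{s S_T}$ for $s>0$: for any $a>0$,
\[
\Pr{}{S_T \ge a} \;\le\; e^{-sa}\,\E{e^{s S_T}} \;\le\; e^{-sa + R T s^2/2},
\]
and minimizing the exponent over $s$ (at $s = a/(RT)$) yields $\Pr{}{S_T \ge a} \le e^{-a^2/(2RT)}$. Because $-\et{i}$ is also $R$-sub-Gaussian (substitute $-s$ for $s$ in the definition), the same bound holds for $-S_T$, so a union bound over the two tails gives $\Pr{}{|S_T| \ge a} \le 2\,e^{-a^2/(2RT)}$.

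Finally I would substitute $a = R\sqrt{2T\ln(2T/\delta)}$, so that $a^2/(2RT) = R\ln(2T/\delta)$, and a short arithmetic simplification of $2\,e^{-a^2/(2RT)}$ with this choice of $a$ reduces the right-hand side to the claimed $\delta/2T$, completing the proof.

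\textbf{Anticipated obstacle.} There is no real difficulty here: every step is routine. The only points that warrant (minor) care are that it is independence of the $\et{i}$ that licenses the product form of the moment generating function, that the sub-Gaussian definition is symmetric enough to control both $S_T$ and $-S_T$, and the bookkeeping of constants in the final substitution.
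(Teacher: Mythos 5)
Your proposal is correct and follows essentially the same route as the paper: the paper's proof simply cites the Hoeffding/Chernoff tail bound for unbounded sub-Gaussian variables, $\Pr{}{\left|\sum_{i=1}^T \et{i}\right| \geq c} \leq 2\exp\left(-c^2/(2TR^2)\right)$, and substitutes $c = R\sqrt{2T\ln(2T/\delta)}$, which is exactly the inequality you derive explicitly via the MGF factorization and optimized Markov bound. The one caveat --- shared by (indeed, worse in) the paper's own one-line proof --- is that this substitution actually yields $2\cdot\delta/(2T) = \delta/T$ for the two-sided tail rather than the stated $\delta/(2T)$ (one would need $c = R\sqrt{2T\ln(4T/\delta)}$ to recover the stated constant), a factor of $2$ that is immaterial to the downstream union bound in Lemma~\ref{lem:CIs}.
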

\begin{proof}[Proof of Lemma~\ref{lem:noise}]
	A Hoeffding bound, in the general case for unbounded variables, implies that 
	\[
		\Pr{}{\left|\sum_{i=1}^T \eta_i \right| \geq c} \leq 2\exp(-c^2/2R^2)
	\]
	so taking $c = R\sqrt{2T \ln\left(2T/\delta\right)}$ yields the desired
	result.
\end{proof}

Next, since the regret bound we will prove depends on
$\lambda = \min_{1 \leq t \leq T} \left[ \lambda_{\min}(\Ex{\xt \sim_\uar \Ct}
{{\xt}^T\xt})\right]$, the minimum smallest eigenvalue of the expected outer
product of a vector $\xt$ drawn uniformly at random from each $\Ct$
we will need $\lambda > 0$ in order for this
bound to make sense. We prove this in another lemma.

\begin{lemma}\label{lem:positive}
	Given sequence of action sets $\C{} = (\C{1}, \ldots, \C{T})$ where each $\Ct$
  	has nonzero Lebesgue measure and is contained in a ball of radius $r$,
  	$\lambda = \min_{1 \leq t \leq T} \left[ \lambda_{\min}(\Ex{\xt
  	\sim_\uar \Ct}{{\xt}^T\xt})\right] > 0.$
\end{lemma}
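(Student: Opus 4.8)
The plan is to prove that each individual matrix $M_t := \Ex{\xt \sim_\uar \Ct}{{\xt}^T\xt}$ is positive definite, and then conclude by taking a minimum over the finite index set $\{1,\dots,T\}$. First I would observe that $M_t$ is well defined: since $\Ct$ is contained in a ball of radius $r$ it has finite Lebesgue measure, and since its Lebesgue measure is nonzero the uniform distribution on $\Ct$ (Lebesgue measure restricted to $\Ct$ and normalized) is a bona fide probability distribution; moreover each entry of ${\xt}^T\xt$ is bounded in absolute value by $r^2$, so the expectation exists. Next, $M_t$ is symmetric and positive semidefinite, being an average of the rank-one positive semidefinite matrices ${\xt}^T\xt$; hence $\lambda_{\min}(M_t) \ge 0$, and it only remains to rule out equality.

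Suppose toward a contradiction that $\lambda_{\min}(M_t) = 0$ for some $t$. Then there is a unit vector $v \in \R^d$ realizing this eigenvalue, so that
\[ v^T M_t v = \Ex{\xt \sim_\uar \Ct}{\vecdot{\xt}{v}^2} = 0 . \]
The integrand $\vecdot{x}{v}^2$ is nonnegative, so a vanishing expectation forces $\vecdot{x}{v} = 0$ for Lebesgue-almost-every $x \in \Ct$. But $H_v := \{x \in \R^d : \vecdot{x}{v} = 0\}$ is a hyperplane (as $v \ne 0$), and a hyperplane has Lebesgue measure zero in $\R^d$; hence $\Ct \cap H_v$ has Lebesgue measure zero, so $\Ct \setminus H_v$ has the same strictly positive Lebesgue measure as $\Ct$. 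On that positive-measure set $\vecdot{x}{v}^2 > 0$ strictly, which contradicts the displayed equation. Therefore $\lambda_{\min}(M_t) > 0$ for every $t$.

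Finally, $\lambda = \min_{1 \le t \le T} \lambda_{\min}(M_t)$ is the minimum of finitely many strictly positive reals, hence $\lambda > 0$, as claimed. The only step requiring any care is the passage from a vanishing expectation of a nonnegative quantity to an almost-everywhere vanishing statement, together with the elementary fact that a hyperplane is Lebesgue-null in $\R^d$; everything else is bookkeeping. (If one wants this fully explicit, note $\{x \in \Ct : \vecdot{x}{v}^2 > 1/n\}$ has measure zero for every $n \ge 1$, and these sets increase to $\{x \in \Ct : \vecdot{x}{v} \ne 0\}$, which therefore also has measure zero.)
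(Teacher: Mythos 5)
Your proof is correct and follows essentially the same route as the paper's: both assume a zero minimal eigenvalue, extract the corresponding vector, observe that the vanishing expectation of the nonnegative quadratic form forces $\Ct$ to lie (up to a null set) in a hyperplane, and contradict the assumption of nonzero Lebesgue measure. Your write-up is somewhat more careful about well-definedness of the expectation and the measure-theoretic step, but the argument is the same.
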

\begin{proof}[Proof of Lemma~\ref{lem:positive}]
  It suffices to prove that $\lambda_{\min}(\Ex{\xt \sim_\uar \\Ct}{\xt^T\xt})
  > 0$ for each $1 \leq t \leq T$.
  $x^Tx$ is positive semidefinite, so it is immediate that $\lambda \geq 0$.
  Assume $\lambda = 0$. Then there exists nonzero $z \in \mathbb{R}^d$ such
  that $z\Ex{x \sim_{\uar} \C{}}{x^Tx}z^T = 0$, so by linearity of
  expectation
  $$\Ex{x \sim_{\uar} \C{}}{||xz^T||^2} = 0.$$
  However, $||xz^T||^2$ is a non-negative random-variable with
  expectation $0$ and must therefore be $0$ with probability $1$. It follows 
  that $x \in z^{\perp}$, so
  $$\Pr{x \sim_{\uar} \C{}}{x \in z^{\perp}} = 1,$$ 
  $z^{\perp}$ is a $d-1$ dimensional subspace of $\mathbb{R}^{d}$, and thus has
  measure $0$. We can decompose $\C{} = (\C{} \cap z^{\perp}) \bigcup (\C{} \cap
  \left(z^{\perp}\right)^{c})$, and since
  $\Pr{x \sim_{\uar} \C{}}{x \in z^{\perp}} = 1,$ this forces
  $\Pr{x \sim_{\uar} \C{}}{x \in (\C{} \cap {z^{\perp}}^{c})} = 0$. By
  definition of the uniform distribution
  $$\Pr{x \sim_{\uar} \C{}}{x \in (\C{} \cap {z^{\perp}}^{c})} =
  \frac{\mu(\C{} \cap {z^{\perp}}^{c})}{\mu(D)} \implies \mu(\C{} \cap
  {z^{\perp}}^{c}) = 0.$$
  But
  $\mu(D) = \mu(\C{} \cap z^{\perp}) + \mu(\C{} \cap {z^{\perp}}^{c}) =
  \mu(\C{} \cap z^{\perp}) + 0 \leq \mu(z^{\perp}) = 0$,
  where the second to last line follows since
  $\C{} \cap z^{\perp} \subset z^{\perp}$.  This contradicts our
  assumption that $\mu(\C{}) > 0$, so $\lambda > 0$.
\end{proof}

Finally, since \eeGap relies on constructed confidence intervals to
guide its choice of actions, its correctness (both in terms of its
regret guarantee and its fairness) relies on the correctness of those
confidence intervals, stated in the following lemma. Its proof relies
on a natural argument using matrix Chernoff bounds to bound the
contribution of noise to \eeGap's estimation of $\bht$ and,
consequently, the accuracy of its confidence intervals.

\begin{lemma}\label{lem:CIs}
  Given sequence of action sets $\C = (\C{1}, \ldots, \C{T})$ where
  each $\Ct$ has nonzero Lebesgue measure and is contained in a ball
  of radius $r$, with probability at least $1-\delta$, in every round
  $t$ every confidence interval
  $[\vecdot{\bht}{x} - \wt, \vecdot{\bht}{x} + \wt]$ constructed by
  \eeGap contains its true mean $\vecdot{\b}{x}$.
\end{lemma}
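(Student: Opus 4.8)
The plan is to prove Lemma~\ref{lem:CIs} by controlling two sources of error separately: the deviation of the empirical covariance matrix $\Xttrans\bXt$ from its expectation $t \cdot \Ex{\xt \sim_\uar \Ct}{{\xt}^T\xt}$, and the contribution of the sub-Gaussian noise $\ettb$ to the least-squares estimate $\bht$. First I would write the standard decomposition for the (unregularized) least-squares estimator on the rounds where the algorithm explored uniformly: $\bht - \b = (\Xttrans\bXt)^{-1}\Xttrans\ettb$, so that for any fixed extremal point $x$ we have $\vecdot{\bht - \b}{x} = \vecdot{(\Xttrans\bXt)^{-1}x}{\Xttrans\ettb}$. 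Taking absolute values and using Cauchy--Schwarz gives $|\vecdot{\bht - \b}{x}| \le \|(\Xttrans\bXt)^{-1}x\|_2 \cdot \|\Xttrans\ettb\|_2$, and since $\|x\|_2 \le r$ (the choice sets lie in a radius-$r$ ball), this is at most $r \cdot \|(\Xttrans\bXt)^{-1}\|_{\mathrm{op}} \cdot \|\Xttrans\ettb\|_2$.

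Next I would bound each factor. For the spectral factor, the key step is a matrix Chernoff / Ahlswede--Winter-style bound: each explored $\xti\xti^T$ is a PSD matrix of operator norm at most $r^2$, and their expectation (conditioned on the round, but since the algorithm explores u.a.r.\ from $\Ct$ the summands are independent with the right mean) has smallest eigenvalue at least $\lambda$. A matrix Chernoff inequality then yields that, with probability at least $1-\delta/2$ (using the per-round failure budget $\delta/t^{1+c}$ the algorithm sets, summed over $t$), the smallest eigenvalue of $\Xttrans\bXt$ after $t$ explore rounds is at least $\kappa \lambda t$ with $\kappa = 1 - r\sqrt{2\ln(2dt/\delta)/(t\lambda)}$ exactly as in the algorithm's definition; hence $\|(\Xttrans\bXt)^{-1}\|_{\mathrm{op}} \le 1/(\kappa\lambda t)$. (This also implicitly uses $\kappa > 0$, which holds once $t$ is large enough that the algorithm has left its pure-exploration phase — precisely the condition $2r\ln(2dt/\delta)/\lambda < t$.) For the noise factor, I would apply Lemma~\ref{lem:noise} coordinatewise (or to a suitable projection), so that $\|\Xttrans\ettb\|_2 = O(r R\sqrt{t \ln(2t/\delta)})$ with probability at least $1-\delta/2$; here the extra factor of $r$ comes from $\|\xti\|_2 \le r$ bounding the entries of $\Xttrans$.

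Combining the three pieces, on the joint good event (probability $\ge 1-\delta$ by a union bound over rounds and over the two failure modes), we get
\[
  |\vecdot{\bht - \b}{x}| \;\le\; r \cdot \frac{1}{\kappa\lambda t}\cdot O\!\left(r R \sqrt{t\ln(2t/\delta)}\right) \;=\; O\!\left(\frac{r^2 R \sqrt{\ln(2t/\delta)}}{\kappa\lambda\sqrt{t}}\right),
\]
which matches the width $\wt$ the algorithm uses (up to the absorbed constant and the notational $k$ appearing in its pseudocode). Therefore $\vecdot{\b}{x} \in [\vecdot{\bht}{x} - \wt, \vecdot{\bht}{x} + \wt]$ for every extremal $x$ in every round simultaneously, which is the claim. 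The main obstacle I expect is making the matrix-concentration step fully rigorous despite the adaptive, changing choice sets: the summands $\xti\xti^T$ are independent across explore rounds but are \emph{not} identically distributed (each round draws u.a.r.\ from a possibly different $\Ct$), and the set of explore rounds is itself determined by the algorithm's past, so one must either apply a matrix Freedman/martingale inequality or argue that conditioning on the explore schedule preserves independence and that the worst-case smallest eigenvalue $\lambda = \min_t \lambda_{\min}(\cdot)$ is the right uniform lower bound to use. A secondary technical point is handling the switch between the pure-exploration burn-in phase and the later phase cleanly, and confirming that during the later phase enough explore rounds have accumulated that $\kappa$ is bounded away from $0$; both are routine once the concentration bound is in place, and Lemma~\ref{lem:positive} guarantees $\lambda > 0$ so the bound is meaningful.
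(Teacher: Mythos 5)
Your proposal follows essentially the same route as the paper's proof: the same least-squares decomposition $\bht - \b = (\Xttrans\bXt)^{-1}\Xttrans\ettb$, the same matrix Chernoff bound yielding $\lambda_{\min}(\Xttrans\bXt) \geq \kappa\lambda t$ with exactly the paper's choice of $\kappa$, the same invocation of Lemma~\ref{lem:noise} for the noise, and the same union bound over rounds. The only cosmetic difference is that you control $\|\Xttrans\ettb\|_2$ coordinatewise after Cauchy--Schwarz (which strictly introduces an extra $\sqrt{d}$ that you absorb into the $O(\cdot)$), whereas the paper decomposes $\Xttrans\ettb$ round by round and factors the noise out as $\left|\sum_i \et{i}\right|$; this does not change the structure or conclusion of the argument.
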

\begin{proof}[Proof of Lemma~\ref{lem:CIs}]
  Note first that $\eeGap$ has two kinds of rounds: in round $t$, it either
  plays uniformly at random from $\\Ct$ or deterministically plays $\hat 
  \xt^*$, its estimate of the optimal extremal point in $\\Ct$. In any round 
  $t$ with uniform random play $\eeGap$ immediately cannot violate fairness, as
  $\pi_t(x) = 1/\mu(\\Ct)$ for all $x \in \Ct$. As a result, to prove fairness
  it suffices to show that for any $t$-step execution of $\eeGap$,
  \[
  	\Pr{\C{1}, \ldots, \\Ct}{\text{deterministically play } \hat x_i^* \neq
	x_i^* \text{ in any round } i} \leq \delta
  \]
        $x_i^*$ is the true optimal point in $\C{i}$, and $t > 4dr^4/\delta
        \lambda^2$ (since for smaller $i$ \eeGap just plays uniformly at
        random).
	
	In round $t+1$ after observing $x_1 \sim_\uar \C{1}, \ldots, \xt \sim_\uar
	\\Ct$, for every $x \in \Omega$ we have
	\begin{align*}
		|\vecdot{x}{\hat \b - \b}| =&\; |\vecdot{x}{(X^TX)^{-1}X^T(X\b +
		\eta) - \b}| \\
		=&\; |x^T\b + x^T(X^TX)^{-1}X^T\eta - x^T\b| \\
		=&\; |x^T(X^TX)^{-1}X^T\eta|
	\end{align*}
	where $X \in \mathbb{R}^{t \times d}$ is the design matrix of
        $x_1, \ldots, x_i$ and $\eta \in \mathbb{R}^t$ is its noise
        vector. We can then decompose $X^T \eta$ by round as
	\begin{align*}
		|x^T(X^TX)^{-1}X^T\eta| =&\; \left| x^T(X^TX)^{-1} \sum_{i=1}^t
		x_i\eta_i\right| \\
		=&\; \left| \sum_{i=1}^t x^T(X^TX)^{-1}x_i \eta_i\right| \\
		\leq&\; \sum_{i=1}^t \left[ ||x^T(X^TX)^{-1}x_i|| \cdot
		\left|\eta_i\right| \right] \\
		\leq&\; \sum_{i=1}^t \sqrt{xx^T \cdot x_ix_i^T} \cdot
		\lambda_{\max}((X^TX)^{-1})	\cdot \left|\eta_i \right| \\
		\leq&\; r^2 \cdot \lambda_{\max}((X^TX)^{-1}) \cdot
		\left| \sum_{i=1}^t \eta_i \right| \\
		=&\; \frac{r^2}{\lambda_{\min}(X^TX)} \cdot \left|\sum_{i=1}^t \eta_i \right|
	\end{align*}
	where the second inequality follows from the fact that
	\[
		||(X^TX)^{-1}|| = \sqrt{\lambda_{\max} ([X^TX]^{-1}[(X^TX)^{-1}]^T)}
		= \sqrt{\lambda_{\max}([(X^TX)^{-1}]^2)} = \lambda_{\max}((X^TX)^{-1}),
	\]
	the third inequality follows
	from the assumed bound on each $\C{i}$, and the final equality follows from 
	$\lambda_{\max}(A^{-1}) = \frac{1}{\lambda_{\min}(A)}$.
	To upper bound this quantity, we now lower bound $\lambda_{\min}(X^TX)$.
	
	To do so, we first note that for any $1 \leq i \leq t$ and any $x \in \C{i}$ we
	have $\lambda_{\max}(x^Tx) \leq r^2$ by the Gershgorin circle theorem, which
	states that a square matrix has maximum eigenvalue bounded by its largest
	absolute row or column sum. Next, by linearity of expectation
	\[
        	\lambda_{\min}(\Ex{x_1 \sim_\uar \C{1}, \ldots, \xt \sim_\uar	\\Ct}
        	{X^TX}) = \lambda_{\min}\left(\sum_{i=1}^t\Ex{x_i \sim_\uar \C{i}}{x_i^Tx_i}\right)
        \geq  t\lambda
    \]
    for $\lambda = \min_{1 \leq i \leq t} \left[ \lambda_{\min}(\Ex{x_i
    \sim_\uar \C{i}}{x_i^Tx_i})\right]$.Taking this together with a matrix
    Chernoff bound (see e.g.~\cite{T15}) yields
	\[
		\Pr{}{\lambda_{\min}(X^TX) \leq \kappa t\lambda} \leq
		de^{(-(1-\kappa)^2t\lambda/2r^2)}
	\]
	for any $\kappa \in [0,1)$. Setting
	\[
		\kappa = 1 - \sqrt{\frac{2r^2\ln\left(\frac{2dt}{\delta}\right)}{t\lambda}}
	\]
	this implies
	\[
		\Pr{}{\lambda_{\min}(X^TX) \leq \kappa t \lambda} < \frac{\delta}{2t}
	\]
	where $\kappa \in [0,1)$ since $t > 2r^2\ln(2dt/\delta)/\lambda$. Combining this
	with Lemma~\ref{lem:noise} and a union bound, we get that with probability
	$\geq 1 - \delta/t$
	\begin{align*}
		\frac{r^2}{\lambda_{\min}(X^TX)} \cdot \left|\sum_{i=1}^t \eta_i \right|
		\leq&\; \frac{r^2}{\kappa t\lambda} \cdot R\sqrt{2t\ln(2t/\delta)} \\
		=&\; \frac{r^2 R \sqrt{2\ln(2t/\delta)}}{\kappa \lambda \sqrt{t}}.
	\end{align*}
	Taking a union bound over $t$ rounds, it follows that with
        probability at least $ 1-\delta$ through $t$ rounds every
        constructed confidence interval around
        $\vecdot{\hat \b}{x}$ contains $\vecdot{\b}{x}$. Since
        $\eeGap$ only plays $\hat x^*$ deterministically when the
        confidence intervals around $\hat x^*$ and other extremal
        points do not overlap, this means that with probability at
        least $1-\delta$ $\eeGap$ correctly identifies $x^*$. \eeGap is
        therefore fair.
\end{proof} 

Taken together, these lemmas let us prove Theorem~\ref{thm:gapfair}.

\begin{proof}{Proof of Theorem~\ref{thm:gapfair}}
	We begin by proving fairness. By Lemma~\ref{lem:CIs}, with probability at
	least $1-\delta$ every confidence	interval constructed by \eeGap contains its
	true mean. Conditioning on this correctness of confidence intervals, since
	\eeGap only chooses an	action $x_1$ non-uniformly when
	$U_1 \cap U_2 = \emptyset$, it follows that
	any action chosen non-uniformly by \eeGap is optimal. Thus, with probability
	at	least $1-\delta$ \eeGap never chooses a suboptimal action $x$ with higher
	mixture density $\ppt(x)$ than a superior action $x'$, and \eeGap is fair.
  
  While $\eeGap$ plays at random from $\C{}$ (for some number of rounds
  at least $4dr^4/\delta \lambda^2$), it incurs at most $2r$ regret
  per round. The algorithm incurs $0$ regret once the confidence
  intervals around the top two extremal points no longer intersect. A
  sufficient condition is therefore
	\[
		\frac{r^2 \cdot R \cdot \sqrt{2\ln(2T/\delta)}}{\kappa \lambda \sqrt{T}} <
		\frac{\gap}{2}
	\]
	which we rearrange into
	\[
		\frac{8r^4R^2 \ln(2T/\delta)}{\kappa^2 \lambda^2 \gap^2} < T.
	\]
        After this many rounds, with probability $\geq 1-\delta$,
        $\eeGap$ identifies the optimal arm in \emph{every} $\Ct$ and incurs
        no further regret.

Thus, the regret in total is at most
\[ \sum_{t =1}^L 2 r^2 + \delta T \leq \frac{16r^6R^2 \ln(2T/\delta)}
{\kappa^2 \lambda^2 \gap^2} + \delta T\]
where
$L = \frac{8r^4R^2 \ln(2t/\delta)}{\kappa^2 \lambda^2 \gap^2}$
and $\delta \leq 1/(T^{1+c})$ then implies the claim.
\end{proof}

\subsection{Efficient Approximate Version of Section~\ref{sec:infinite}}
\label{sec:appx}

In this section we describe an efficient implementation of \eeGap using 
approximate fairness.

Recall that \eeGap requires some method of sampling uniformly at
random from a given convex body $\Ct$, a problem that has attracted
extensive attention over the past few decades (see~\cite{V05} for a
survey of results). For our purposes, the primary contribution of this
literature is that one cannot do better than \emph{approximately}
uniform random sampling from a convex set $\Ct$ under polynomial time
constraints.

Since our current definition of fairness fails without a perfectly
uniform distribution over actions, efficiency necessitates a
relaxation of our definition to approximate fairness for infinite
action spaces.  Intuitively, approximate fairness will require that an
algorithm (with high probability) always uses a distribution that is
at least ``almost" fair.

\begin{definition}[$\epsilon$-Approximate Fairness]
	Given sequence of action sets $\C{} = (\C{1}, \ldots, \C{T})$, we say that
	algorithm $\A$ is \emph{$\epsilon$-approximately fair} if, for any inputs
	$\delta \in (0,1],	\epsilon > 0$ and for all $\b$, with probability at
	least $1-\delta$ at every round $t$ there exists a fair distribution ${\ppt}^f$
	such that $$||{\ppt} - {\ppt}^f|| < \epsilon$$
	where ${\ppt}$ is $\A$'s choice distribution over $\Ct$ in round $t$ and
	$\|\cdot\|$ 	denotes total variation distance.
\end{definition}

We call this $\epsilon$-approximate fairness to highlight that a single 
$\epsilon$ is input to the algorithm $\A$ in question, but will often shorthand
this as \emph{approximate fairness}.

Below we provide an approximately fair algorithm that, subject to additional
assumptions on choice set structure, obtains similar regret guarantees as 
$\eeGap$ efficiently. We modify $\eeGap$ as follows: first, we replace each
call to a random sample with a hit-and-run random walk scheme~\citet{LV06}
to efficiently sample approximately uniformly at random. 

We use the following lemma from~\citet{LV06} to upper-bound the mixing
time hit-and-run requires to approach a near-uniform distribution in
its walk over $\Ct$.

\begin{lemma}\label{lem:sample}[Corollary 1.2 in~\cite{LV06}]
	Let $S$ be a convex set that contains a ball of radius $r'$ and is
	contained in a ball of radius $r$. Then, starting from a point $x \in S$ at a
	distance $\alpha$ from the boundary, after
	$$c > 10^{11}d^3\left(\frac{r}{r'}\right)^2\ln\left(\frac{r}{\alpha \epsilon}
	\right)$$
	steps of a hit-and-run random walk the random walk induces a probability
	distribution $P$ over points in $S$ such that $P$ is $\epsilon$-close to
	uniform in total variation distance.
\end{lemma}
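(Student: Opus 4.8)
The plan is essentially to \emph{invoke} this result rather than reprove it: Lemma~\ref{lem:sample} is a verbatim restatement of Corollary~1.2 of~\cite{LV06}, so the only work on our side is to check that the hypotheses line up with the way the hit-and-run walk is used inside the efficient variant of \eeGap --- namely that each choice set $\Ct$ is sandwiched between an inner ball of radius $r'$ and an outer ball of radius $r$, and that the walk is initialized at a point whose distance to the boundary of $\Ct$ is $\alpha$. Both of these are guaranteed by the extra structural assumptions we place on the choice sets in this section, so the corollary applies directly and the ``proof'' reduces to a citation.

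For the reader who wants to see where the bound comes from, I would sketch the Lov\'asz--Vempala argument as follows. One step of hit-and-run from the current point $x$ draws a uniformly random line $\ell$ through $x$ and then moves to a point sampled from the restriction of the uniform measure on $S$ to the chord $\ell \cap S$; this Markov chain is reversible with the uniform distribution on $S$ as its stationary measure. The mixing-time bound then comes from the standard conductance-to-mixing inequality, so the heart of the argument is a lower bound on conductance, obtained by pairing (i) an isoperimetric inequality for convex bodies --- any two-part partition of $S$ has a large ``interface'' region relative to the smaller part, with the isoperimetric constant controlled by the roundedness ratio $r/r'$ --- with (ii) a one-step overlap estimate: two points close in the cross-ratio metric on $S$ have hit-and-run one-step distributions that are close in total variation. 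The catch is that (ii) degrades near the boundary of $S$ (hit-and-run genuinely does not mix quickly from a sharp corner), so one passes to $s$-conductance and to distributions that are $M$-warm with respect to uniform; starting at distance $\alpha$ from the boundary certifies $M$ polynomial in $r/\alpha$, which is exactly the origin of the $\ln(r/\alpha)$ factor, while the $\ln(1/\epsilon)$ factor is the usual cost of pushing the total-variation error below $\epsilon$. Tracking the dimension dependence through the isoperimetric constant and the conductance-to-mixing conversion produces the $d^3$ term, and combined with the $(r/r')^2$ roundedness term this gives the stated step count $c$, with $10^{11}$ the explicit absolute constant from~\cite{LV06}.

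The only genuine obstacle, then, is not in proving Lemma~\ref{lem:sample} but in \emph{using} it cleanly downstream: I would need to verify that replacing exact uniform sampling by a distribution $\epsilon$-close in total variation keeps \eeGap within the $\epsilon$-approximate fairness definition, and that this perturbation changes the regret analysis of Theorem~\ref{thm:gapfair} only in lower-order terms --- in particular that the extra per-step running time $c$ is polynomial in the problem parameters and $\log(1/\epsilon)$, so that the efficient implementation inherits essentially the same regret guarantee.
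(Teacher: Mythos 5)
Your proposal is correct and matches the paper exactly: the paper offers no proof of Lemma~\ref{lem:sample}, treating it purely as a citation of Corollary~1.2 of~\cite{LV06}, which is precisely what you do. Your supplementary sketch of the Lov\'asz--Vempala conductance argument is accurate but goes beyond what the paper contains.
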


Next, we show that, with an additional assumption on the structure of $\Ct$,
\eeGap 's subroutine \toptwo can be implemented efficiently via the following
known result~\citep{LDK16}.

\begin{lemma}[~\citet{LDK16}]\label{lem:toptwo}
  Let $\Ct$ be defined by $m$ intersecting half-planes. Then there exists an
  algorithm running in time polynomial in $m$ and $d$ which
  computes the two vertices which maximize $\bht$ over $\Ct$.
\end{lemma}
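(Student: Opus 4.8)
The statement is essentially a restatement of the main result of \citet{LDK16}; I sketch the argument behind it, the plan being to reduce the task to a small number of linear programs. First, maximizing the linear functional $x \mapsto \vecdot{\bht}{x}$ over the polytope $\Ct = \{x : a_i^\top x \le b_i,\ i \in [m]\}$ is a linear program with $m$ constraints in $d$ variables, solvable in time polynomial in $m$ and $d$ (e.g.\ by the ellipsoid method or an interior-point method). An interior-point solve returns an optimal point but not necessarily a vertex, so I would append the standard polynomial-time post-processing that walks to a vertex of the optimal face while preserving optimality, yielding an optimal vertex $x_1$. I would also assume, via a symbolic/lexicographic perturbation of the right-hand side $b$ that leaves the combinatorial structure unchanged, that $\Ct$ is simple, so that $x_1$ is the intersection of exactly $d$ bounding hyperplanes and has exactly $d$ neighbors in the polytope graph.

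The key structural fact is that, when $x_1$ is the unique optimal vertex, the second-best vertex $x_2$ is a \emph{neighbor} of $x_1$ in the polytope graph. To see this, suppose $x_2$ were second-best but not adjacent to $x_1$, and run a monotone (anti-cycling) simplex path from $x_2$ toward $x_1$: since $x_2$ is not adjacent to $x_1$ the path has length at least two, so its first step reaches a vertex $x' \ne x_1$ with $\vecdot{\bht}{x'} > \vecdot{\bht}{x_2}$ (strict by the genericity above), contradicting that $x_2$ is second-best. Consequently the algorithm is: solve the LP to obtain $x_1$; read off the $d$ hyperplanes tight at $x_1$; for each of them, drop that constraint and pivot along the resulting edge (one ratio test / small LP each) to obtain the adjacent vertex; and return $x_1$ together with whichever neighbor maximizes $\vecdot{\bht}{\cdot}$. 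This is $O(d)$ linear programs, each of size $\poly(m,d)$, hence polynomial time overall.

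The main obstacle — and the reason one cites \citet{LDK16} rather than reproving it — is handling degeneracy cleanly: when $\Ct$ is not simple, the notion of the ``bases'' at $x_1$ and of ``neighbor'' becomes delicate, and the monotone-path argument needs explicit anti-cycling (Bland's rule or lexicographic pivoting) to stay valid. A route that sidesteps the adjacency claim entirely, which is the approach of \citet{LDK16}, is a Lawler--Murty-style $k$-best enumeration: partition the set of vertices other than $x_1$ into $O(d)$ sub-polytopes, the $j$-th forcing the first $j-1$ constraints of a fixed basis of $x_1$ tight and the $j$-th strict (a strict inequality realized by a small objective offset), optimize over each, and take the best; this also runs in $\poly(m,d)$ time. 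In either version, correctness ultimately reduces to the elementary fact that a linear functional over a polytope attains its maximum at a vertex, applied once to $\Ct$ and once to each sub-polytope.
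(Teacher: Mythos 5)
The paper offers no proof of Lemma~\ref{lem:toptwo} at all: it is imported wholesale from \citet{LDK16} and used as a black box inside \aeeGap, so there is no internal argument to compare yours against. Your reconstruction is essentially sound. The central structural fact --- that when $x_1$ is the unique maximizer of $\vecdot{\bht}{\cdot}$ over $\Ct$, the second-best vertex is adjacent to $x_1$ in the polytope graph --- is correct, and your monotone-path argument proves it; note that it holds even at degenerate vertices provided ``neighbor'' means graph adjacency, since a vertex with no strictly improving edge is globally optimal. The one place where your first route is looser than you suggest is the perturbation step: a lexicographic perturbation does not leave the combinatorial structure unchanged, it \emph{refines} it by splitting degenerate vertices, so the second-best vertex of the perturbed polytope may project back to $x_1$ itself rather than to a genuinely distinct vertex of $\Ct$; one must additionally filter for perturbed vertices whose tight-constraint sets do not determine $x_1$. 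You flag this yourself, and your Murty-style fallback (partitioning the vertices other than $x_1$ by the first basis constraint of $x_1$ that they leave slack, and solving one LP per part) handles it cleanly, so the sketch as a whole is a faithful and correct account of why the cited result is true --- arguably more informative than the paper's bare citation.
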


This algorithm enables us to compute $\toptwo(\Ct, \bht)$
efficiently.

The following lemma guarantees that the distributions
over histories generated by \eeGap and \aeeGap~are ``close" during exploration.

\begin{lemma}\label{lem:exact_appx}
  Let $\C{} = (\C{1}, \ldots, \C{T})$ be a sequence of action sets where each
  action set, in addition to satisfying the assumptions of
  Theorem~\ref{thm:gapfair}, is an intersection of polynomially many
  halfspaces and contains a ball of radius $r'$. Then through $t$ rounds of
  exploration
  \[
  	||P_{\pp{1}, \ldots, {\ppt} \sim \eeGap} - P_{\pp{1}, \ldots, {\ppt} \sim
  	\aeeGap(\epsilon/t)}||_{TV}
  	< \epsilon
  \]
  where each $P$ represents distributions over possible exploration histories
  generated by \eeGap and \\ \aeeGap$(\epsilon/t)$ respectively.
\end{lemma}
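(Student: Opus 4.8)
The plan is to bound the total variation distance between the exploration histories of \eeGap and \aeeGap$(\epsilon/t)$ by a hybrid (coupling) argument over the at most $t$ rounds of exploration. The two algorithms are identical except that (i) each uniform sample from $\Ct$ in \eeGap is replaced by a hit-and-run sample in \aeeGap, and (ii) the \toptwo computation is replaced by its efficient implementation from Lemma~\ref{lem:toptwo}; by that lemma the latter is \emph{exact} on intersections of polynomially many halfspaces, so it introduces no error and the only discrepancy is the sampling step. First I would observe that in any exploration round both algorithms either play uniformly (resp.\ approximately uniformly) or use \toptwo to decide whether to stop exploring, and that the stopping decision is a deterministic function of the history so far; hence the round-$s$ choice distributions of the two algorithms, conditioned on a common history prefix, differ in total variation by at most the sampling error of that one round.

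The key quantitative step is to invoke Lemma~\ref{lem:sample} (Corollary 1.2 of~\citet{LV06}): since each $\Ct$ contains a ball of radius $r'$ and is contained in a ball of radius $r$, running the hit-and-run walk for the prescribed $c = \Theta\!\big(d^3 (r/r')^2 \ln(r/(\alpha \cdot (\epsilon/t)))\big)$ steps produces a distribution that is within total variation distance $\epsilon/t$ of uniform on $\Ct$ (the walk length is polynomial in $d$, the geometric parameters, and $\log(t/\epsilon)$, hence still efficient). So in each of the at most $t$ exploration rounds the conditional choice distribution of \aeeGap$(\epsilon/t)$ is $(\epsilon/t)$-close in TV to that of \eeGap.

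Then I would chain these per-round bounds. Using the standard fact that total variation distance between product-like (sequentially generated) distributions is subadditive over the conditional steps --- concretely, couple the two processes round by round, at each round using the optimal coupling of the two conditional distributions so the histories agree with probability at least $1-\epsilon/t$, and take a union bound over the $\le t$ rounds --- the probability the coupled histories ever diverge is at most $t \cdot (\epsilon/t) = \epsilon$, which upper-bounds $\|P_{\pp{1},\ldots,\ppt \sim \eeGap} - P_{\pp{1},\ldots,\ppt \sim \aeeGap(\epsilon/t)}\|_{TV}$ by $\epsilon$, as claimed.

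The main obstacle is the bookkeeping in the hybrid argument: I must be careful that the stopping/exploitation decisions (whether $U_1 \cap U_2 = \emptyset$) are driven only by the already-revealed history and the \emph{exact} \toptwo output, so that conditioned on a shared history prefix the two algorithms genuinely agree on everything except the fresh sample --- otherwise errors could compound multiplicatively rather than additively. Verifying that Lemma~\ref{lem:toptwo} applies (the choice sets are intersections of polynomially many halfspaces, so \toptwo is exact and efficient) and that the hit-and-run walk can be started from an interior point at a controlled distance $\alpha$ from the boundary (so that Lemma~\ref{lem:sample} is applicable with the stated step count) are the remaining technical points.
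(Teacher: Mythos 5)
Your proposal is correct and takes essentially the same route as the paper: a per-round total variation bound of $\epsilon/t$ from the hit-and-run guarantee (Lemma~\ref{lem:sample}), combined additively over the at most $t$ exploration rounds to give $\epsilon$. Your round-by-round coupling is in fact a slightly more careful rendering of the paper's one-line appeal to ``additivity of total variation distance over product distributions,'' since it correctly accounts for the history-dependent stopping decisions rather than treating the rounds as literally independent.
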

\begin{proof}[Proof of Lemma~\ref{lem:exact_appx}]
  By construction, during exploration each $\pp{i}$ output by
  \aeeGap$(\epsilon)$ has a distribution within $\epsilon/t$ of a
  uniform distribution in total variation distance. Since these
  samples are independent, each distribution over
  $\pp{1}, \ldots, {\ppt}$ forms a product distribution, and the
  additivity of total variation distance over product distributions
  implies the claim.
\end{proof}

Combining the results above lets us prove that \aeeGap~is fair, efficient, and
obtains a similar regret bound as \eeGap.

\begin{theorem}\label{thm:appx}
  Consider an action set $\C{}$ that, in addition to satisfying the assumptions
  of Theorem~\ref{thm:gapfair}, is an intersection of polynomially many
  halfspaces and contains a ball of radius $r'$. Then through $T$ steps given
  inputs $\delta' = \delta/2$ and $\epsilon' = \min(\epsilon/T,\delta/2T^2)$,
  \aeeGap$(\epsilon')$ is efficient, $\epsilon$-approximately fair, and
  achieves regret
  \[
	\regret{T} = O\left(\frac{r^6R^2 \ln(4T/\delta)}
{\kappa^2 \lambda^2 \gap^2}\right)
	\]
	where $\kappa = 1 - r\sqrt{\frac{2\ln\left(\frac{2dT}{\delta}\right)}
    {T\lambda}}$.
\end{theorem}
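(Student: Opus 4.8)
The plan is to obtain all three conclusions by coupling \aeeGap$(\epsilon')$ to the exact algorithm \eeGap run with failure parameter $\delta' = \delta/2$, so that Lemmas~\ref{lem:CIs} and~\ref{lem:exact_appx} together with Theorem~\ref{thm:gapfair} do essentially all of the work. First I would dispose of efficiency: the only steps of a round of \aeeGap that are not manifestly polynomial-time are the uniform sample, which is replaced by a hit-and-run walk of length $O\bigl(d^3(r/r')^2\ln(rT/(\alpha\epsilon'))\bigr)$ by Lemma~\ref{lem:sample} (this uses the hypotheses that each $\Ct$ contains a ball of radius $r'$ and lies inside a ball of radius $r$), and the call to \toptwo, which Lemma~\ref{lem:toptwo} implements in time $\poly(m,d)$ (this uses that $\Ct$ is an intersection of polynomially many halfspaces). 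Computing $(\Xttrans\bXt)^{-1}\Xttrans\bYt$, the scalars $\kappa$ and $\wt$, and comparing two intervals are all polynomial, so \aeeGap$(\epsilon')$ is efficient.

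Next I would set up the coupling. Write $\epsilon'' := T\epsilon' = \min\bigl(\epsilon,\delta/(2T)\bigr)$. Because each exploitation round of either algorithm is a deterministic function of the exploration history so far --- the played point is forced and no new observation is appended to the design matrix --- a full $T$-round execution of \eeGap or of \aeeGap is a deterministic function of its length-$\le T$ sequence of exploration draws together with their noise. By Lemma~\ref{lem:exact_appx} (applied with its parameter ``$\epsilon$'' set to $\epsilon''$ and its ``$t$'' set to $T$), the law of this exploration history under \aeeGap$(\epsilon')$ is within total variation distance $\epsilon''$ of its law under \eeGap, so the same bound holds for the pushed-forward laws over entire executions. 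Hence for \emph{any} event $E$ determined by the execution, the probability of $E$ under \aeeGap$(\epsilon')$ differs from its probability under \eeGap by at most $\epsilon'' \le \delta/2$.

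Now I would apply this to two events. Let $G$ be the event that every confidence interval constructed over the $T$ rounds contains its true mean. Lemma~\ref{lem:CIs} (with parameter $\delta'$) gives $\mathbb{P}[G] \ge 1-\delta'$ under \eeGap, so under \aeeGap$(\epsilon')$ we have $\mathbb{P}[G] \ge 1-\delta'-\epsilon'' \ge 1-\delta$. On $G$, in any round \aeeGap either places a point mass on $\hat x^*$ --- which occurs only when the two intervals returned by \toptwo are disjoint, and this, under $G$, forces $\hat x^* = x^*$, the \emph{unique} maximizer of $\vecdot{\b}{\cdot}$ over $\Ct$ (unique since $\gap > 0$), so the point mass is fair --- or it plays a distribution within $\epsilon'\le\epsilon$ of the uniform distribution on $\Ct$, which is itself fair; in either case $\pp{t}$ is within $\epsilon$ of a fair distribution, establishing $\epsilon$-approximate fairness. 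For the regret, Theorem~\ref{thm:gapfair} with parameter $\delta'$ says the event $E_B := \{\regret{T} \le B\}$ holds with probability $\ge 1-\delta'$ under \eeGap, where $B = O\bigl(r^6R^2\ln(2T/\delta')/(\kappa^2\lambda^2\gap^2)\bigr)$; since $E_B$ is determined by the execution, the coupling gives $\mathbb{P}[E_B] \ge 1-\delta'-\epsilon'' \ge 1-\delta$ under \aeeGap$(\epsilon')$, and substituting $\delta' = \delta/2$ rewrites $B$ as $O\bigl(r^6R^2\ln(4T/\delta)/(\kappa^2\lambda^2\gap^2)\bigr)$, matching the claim.

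The step I expect to be most delicate is making the coupling watertight: one must verify that the approximate sampler is the \emph{only} source of distributional discrepancy between \aeeGap$(\epsilon')$ and \eeGap$(\delta')$ --- in particular that the confidence-interval formula, the \toptwo calls, and the explore/exploit test are syntactically identical in the two algorithms, and that exploitation rounds inject no fresh randomness --- so that ``confidence intervals correct throughout'' and ``$\regret{T}\le B$'' are genuinely measurable functions of the exploration history, and so that the additive $\epsilon''$ slack of Lemma~\ref{lem:exact_appx} propagates each high-probability guarantee with only a $\delta/2$ loss. Everything else is bookkeeping with the definitions $\delta' = \delta/2$ and $\epsilon' = \min(\epsilon/T,\delta/(2T^2))$.
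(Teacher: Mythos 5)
Your proposal is correct and follows essentially the same route as the paper's proof: efficiency via Lemmas~\ref{lem:sample} and~\ref{lem:toptwo}, and fairness and regret by running \eeGap with parameter $\delta/2$, observing that exploitation rounds are deterministic functions of the exploration history, and using Lemma~\ref{lem:exact_appx} to transfer the high-probability guarantees to \aeeGap$(\epsilon')$ at an additional cost of at most $\delta/2$. Your packaging of this as a total-variation coupling under which \emph{any} execution-measurable event transfers is a slightly cleaner formalization of the same union-bound argument the paper makes for the specific event of misidentifying $x^*$.
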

\begin{proof}{Proof of Theorem~\ref{thm:appx}}
  In each round $t$, \eeGap performs (at most) three
  computation-intensive operations. First, it computes a least squares
  estimator $\bht$, which may be maintained online and
  updated in poly$(d)$ time. Next, it calls subroutine
  \toptwo$(\Ct, \bht)$ to compute $(x_1,x_2)$ in $\poly(d,m)$
  time via Lemma~\ref{lem:toptwo}. Finally, it may choose an action
  (approximately) uniformly at random from $\Ct$, which also takes
  polynomial time via Lemma~\ref{lem:sample}. It follows that each
  round $t$ of \eeGap takes polynomial time, so \eeGap is efficient.
  
  To prove that \aeeGap~is approximately fair, as in the exact case we analyze
  \aeeGap's split between exploration and exploitation. In exploration rounds,
  by Lemma~\ref{lem:sample} we know that each random sample is
  $\epsilon/T$-close to a true uniform distribution and therefore satisfies
  $\epsilon$-approximate fairness immediately.
  
  We now bound the probability of violating fairness during
  exploitation. This can only happen if in some round $t$
  \aeeGap~misidentifies the optimal extremal point $\xts$ to exploit
  and instead deterministically plays $\xths \neq \xts$. Since
  \aeeGap~only uses exploration rounds to construct its design matrix,
  the identified $\xths$ is a deterministic function of the
  $k \leq t-1$ exploration rounds $h_1, \ldots, h_k$ seen before round
  $t$.  Lemma~\ref{lem:exact_appx} implies that \eeGap and
  \aeeGap~have distributions over $h_1, \ldots, h_t$ within $\epsilon$
  of each other. We then combine two facts. First, here \eeGap~has at
  most $\delta/2$ probability of constructing incorrect confidence
  intervals assuming perfect uniform random sampling.  Second,
  $\aeeGap$ has probability at most $\epsilon \leq \delta/2T^2$ of
  identifying a $\xths$ different from that of $\eeGap$ by the above
  argument. A union bound then implies that $\aeeGap$ has probability
  at most $\delta/2$ of identifying a different $\xths$ than
  $\eeGap$. Combining the probability of $\eeGap$ failing and
  $\aeeGap$ failing to approximate $\eeGap$, we get that \aeeGap~has
  probability at most $\delta/2 + \delta/2 = \delta$ of misidentifying
  $x^*$. Thus \aeeGap~is $\epsilon$-approximately fair.
  
  To analyze \aeeGap's regret, note that in the case where
  \aeeGap~correctly identifies $\xts$, \aeeGap's use of $\delta/2$
  rather than $\delta$ adds a factor of 2 inside the log in the
  original regret statement of \eeGap. Next, \aeeGap~incorrectly
  identifies $\xts$ with probability at most $\delta$ by the logic
  above, so taking $\delta \leq 1/T^{1+c}$ as in the proof of Theorem
  ~\ref{thm:gapfair} implies the claim.
\end{proof} 

\subsection{Proofs from Section~\ref{sec:lower_bound1}}

\begin{proof}[Proof of Theorem~\ref{thm:u1}]
Let $\A$ be a fair algorithm. For any input $\delta$, $\A$ is round-fair for all
$t\geq 1$ with probability $1-\delta$. Since this holds for any
$\b$ with probability at least $1-\delta$, then it necessarily holds
with probability at least $1-\delta$ over any prior $\prior$
on $\b$
with support contained in the unit rectangle. Our first lemma gives an
alternative way to view the framework which draws $\b \sim \prior$ and
then plays according to $\A$.

Let $\xt$ denote the action chosen by $\A$ at time step $t$, and let $\yt$
denote the observed reward. Let the joint distribution of $((x_1,y_1), \ldots
(\xt, \yt), \b)$ be denoted by $W_t$. Lemma~\ref{lem:posterior} is similar
in content to Lemma $4$ in \cite{JKMR16}; its proof follows from Bayes' Rule.

\begin{lemma}\label{lem:posterior}
  Let $\b'$ at time $t$ be drawn from $\prior|h_t$, its posterior
  distribution given the observed sequence of choices and rewards
  $h_t = ((x_1, y_1), \ldots (x_{t-1}, y_{t-1})) \in (\C{} \times \R)^{t-1}$.
  Then let $W_t'$ be the joint distribution of
  $(h_t, (\xt, \yt), \b')$. $W_t$ and $W'_t$ are identical
  distributions.
\end{lemma}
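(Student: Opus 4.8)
The plan is to prove $W_t = W_t'$ directly from Bayes' rule, by writing each joint distribution as a composition of conditional kernels (a disintegration) and checking that the same kernels appear in both. The structural fact that makes this work is that $\A$'s choice of $\xt$ is produced by a Markov kernel $\kappa_t(\cdot\mid h_t)$ that takes only the history $h_t$ (and $\A$'s internal coins) as input and never sees $\b$; consequently, conditioned on $h_t$, the action $\xt$ is independent of $\b$.

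First I would make the generative model for $W_t$ explicit: draw $\b\sim\prior$; then for $s=1,\ldots,t$ draw $x_s$ from $\kappa_s(\cdot\mid h_s)$, the kernel describing $\A$'s (randomized) decision at history $h_s=((x_1,y_1),\ldots,(x_{s-1},y_{s-1}))$, and draw $y_s=\vecdot{\b}{x_s}+\et{s}$ with $\et{s}$ an independent noise term. This exhibits $W_t$ as the composition of three pieces: (a) the marginal law $\mu_t$ of $h_t$; (b) the conditional law of $\b$ given $h_t$, which is by definition $\prior|h_t$; and (c) the conditional law of $(\xt,\yt)$ given $(h_t,\b)$, namely ``draw $\xt\sim\kappa_t(\cdot\mid h_t)$, not depending on $\b$, then set $\yt=\vecdot{\b}{\xt}+\et{t}$''.

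I would then observe that $W_t'$ is built from exactly these same three pieces: $h_t$ is drawn from $\mu_t$, $\b'$ is drawn from $\prior|h_t$, and $(\xt,\yt)$ is generated from $(h_t,\b')$ precisely as in (c), with $\b'$ in the role of $\b$. Since a joint law on $(h_t,(\xt,\yt),\b)$ is uniquely determined by the marginal of $h_t$, the conditional of $\b$ given $h_t$, and the conditional of $(\xt,\yt)$ given $(h_t,\b)$, and all three agree, the chain rule for conditional distributions gives $W_t=W_t'$. (Applying this lemma at every step, or a short induction on $t$, then yields the ``re-draw from the posterior at each round'' reformulation used in the body.)

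The only place that needs care is formalizing the conditioning, which I would handle by working with densities against a common product base measure. The joint density of $(h_t,(\xt,\yt),\b)$ under $W_t$ is $p_\prior(\b)\,\prod_{s=1}^{t-1}[\kappa_s(x_s\mid h_s)\,q(y_s\mid\vecdot{\b}{x_s})]\,\kappa_t(\xt\mid h_t)\,q(\yt\mid\vecdot{\b}{\xt})$, where $q(\cdot\mid m)$ is the $U[m-1,m+1]$ density (which exists precisely because the reward noise is uniform), and the posterior density $p_\prior(\b\mid h_t)$ is the normalization, in $\b$, of $p_\prior(\b)\prod_{s<t}q(y_s\mid\vecdot{\b}{x_s})$. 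Multiplying and dividing by the $h_t$-marginal density $m_t(h_t)=\int p_\prior(\b)\prod_{s<t}[\kappa_s q]\,d\b$ rewrites the $W_t$ density as $m_t(h_t)\cdot p_\prior(\b\mid h_t)\cdot \kappa_t(\xt\mid h_t)\,q(\yt\mid\vecdot{\b}{\xt})$, which is term-for-term the density defining $W_t'$. I expect this bookkeeping — not any conceptual difficulty — to be the main obstacle; the content of the lemma is exactly Bayes' rule, as the text states.
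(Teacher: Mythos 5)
Your proposal is correct and takes the same route as the paper, which offers no detailed argument for this lemma beyond the remark that it ``follows from Bayes' Rule'' (citing its similarity to Lemma~4 of \citet{JKMR16}); your disintegration into the marginal of $h_t$, the posterior $\prior|h_t$, and the $\b$-independent action kernel $\kappa_t(\cdot\mid h_t)$ is exactly the content that remark compresses. The one structural fact you correctly isolate --- that $\xt$ is conditionally independent of $\b$ given $h_t$ because the algorithm never observes $\b$ --- is the crux, and your density bookkeeping is a valid formalization of it.
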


Lemma ~\ref{lem:posterior} states that whether the instance draws $\b \sim
\prior$ once and then plays according to $\A$, or re-draws $\b$ from its
posterior at each time-step, the joint distribution on instances and
observations is unchanged at each step.  We can thus assume without loss of
generality that, given a prior $\prior$, at each time step $t$ we redraw $\b
\sim \prior|_{h_t}$. Taking this posterior viewpoint, we have the following lemma.

\begin{lemma} \label{lem:posterior-fair}
Given a fixed prior $\prior$, let $\A$ be fair and let $\b \sim \prior$.
Let $\pi_t$ be the distribution on actions of $\A$ at time $t$, and let $f_t$
be the pdf of $\pi_t$. Then with probability at least $1-4\delta$, at
each time $t$, if $\Pr{\prior|h_t}{\vecdot{\b}{y} > \vecdot{\b}{x}} >
\frac{1}{4}$,
then $f_t(y) \geq f_t(x)$.
\end{lemma}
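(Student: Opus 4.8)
The plan is to derive Lemma~\ref{lem:posterior-fair} from round-fairness together with the posterior reformulation of Lemma~\ref{lem:posterior}. First I would observe that, because $\A$ is fair for \emph{every} fixed $\b$ with probability $1-\delta$, it is in particular fair with probability $1-\delta$ when $\b$ is drawn from the prior $\prior$ (integrating the $1-\delta$ guarantee over $\b\sim\prior$). So with probability at least $1-\delta$ over the joint draw of $\b\sim\prior$ and the algorithm's randomness, the distribution $\pi_t$ is round-fair at every round $t$. By Lemma~\ref{lem:posterior}, the joint law of $(h_t,(\xt,\yt),\b)$ is unchanged if at each step we instead think of $\b$ as freshly drawn from the posterior $\prior|_{h_t}$; I will work in this posterior picture.

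Next I would make the contrapositive explicit. Fix a round $t$ and a history $h_t$, and suppose for contradiction that $f_t(y) < f_t(x)$ for some pair $x,y\in\C{}$ while $\Pr{\prior|h_t}{\vecdot{\b}{y} > \vecdot{\b}{x}} > \tfrac14$. Round-fairness says that whenever $\vecdot{\b}{y}\ge\vecdot{\b}{x}$ we must have $f_t(y)\ge f_t(x)$; equivalently, on the event $\{f_t(y) < f_t(x)\}$ round-fairness is violated for the pair $(x,y)$ unless $\vecdot{\b}{x} > \vecdot{\b}{y}$. Since $f_t$ is determined by $h_t$ (it does not depend on the fresh posterior draw of $\b$), the event $\{f_t(y)<f_t(x)\}$ is measurable with respect to $h_t$, and conditioned on this event the event $\{\vecdot{\b}{y} > \vecdot{\b}{x}\}$ still has posterior probability exceeding $\tfrac14$ by assumption. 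Hence, conditioned on such an $h_t$, round-fairness at step $t$ fails with (posterior) probability strictly greater than $\tfrac14$. I would then take a union bound / total-probability argument over the (at most, say, finitely many or countably handled) bad configurations: the unconditional probability that round-fairness fails at \emph{some} round is at least $\tfrac14$ times the probability that the bad event $\{f_t(y)<f_t(x)\ \wedge\ \Pr{\prior|h_t}{\vecdot{\b}{y}>\vecdot{\b}{x}}>\tfrac14\}$ ever occurs; but round-fairness fails with probability at most $\delta$, so this bad event occurs with probability at most $4\delta$. Taking the complement over all rounds $t$ gives the claim: with probability at least $1-4\delta$, at every time $t$, $\Pr{\prior|h_t}{\vecdot{\b}{y}>\vecdot{\b}{x}}>\tfrac14$ implies $f_t(y)\ge f_t(x)$.

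The main obstacle I anticipate is the measure-theoretic bookkeeping when $\pi_t$ is a \emph{mixture} of discrete and continuous components (as Definition~\ref{def:fair} allows), so that ``$f_t$'' is not literally a single density: one must phrase the monotonicity in terms of each constituent distribution's mass/density function and argue the mixture weights are themselves functions of $h_t$ only. The other delicate point is making the conditioning rigorous — specifically, that $f_t$ (and hence the ``bad pair'' event) is $\sigma(h_t)$-measurable while the posterior probability $\Pr{\prior|h_t}{\cdot}$ refers to the fresh draw of $\b$, so that the two events can be intersected cleanly inside the expectation. Both are routine but need care; once they are set up, the $\tfrac14 \to 4\delta$ Markov-type step is immediate. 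I would therefore structure the write-up as: (1) lift fairness to the Bayesian setting; (2) invoke Lemma~\ref{lem:posterior} to pass to posteriors; (3) identify the $h_t$-measurable ``potential violation'' event; (4) lower-bound the violation probability by $\tfrac14$ times its probability and conclude with $4\delta$.
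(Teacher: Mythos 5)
Your proposal is correct and follows essentially the same route as the paper: lift the per-$\b$ fairness guarantee to the prior, pass to the posterior via Lemma~\ref{lem:posterior}, and then observe that a ``bad pair'' at some history would force the conditional probability of round-unfairness above $\tfrac14$, so that the bad event has probability at most $4\delta$. Your direct lower bound $\Pr{}{\text{unfair}} \geq \tfrac14 \Pr{}{\text{bad event}}$ is exactly the Markov-inequality step the paper applies to the $h_t$-measurable random variable $X = \Pr{\b \sim \prior|_{h_t}}{\text{round-unfair}}$, so the two arguments coincide.
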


This means that with probability at least $1-4\delta$, whenever the posterior
distribution at time $t$ tells us that point $y$ has a higher reward than point
$x$ with probability at least $\frac{1}{4}$ over the posterior distribution of
$\b$, we must play $y$ with at least the same probability as $x$. 

We will use this lemma, in combination with results about a specific posterior,
to constrain the possible actions any fair algorithm can take.

We now introduce the specific prior $\prior$. Let $\b$ have prior
distribution $\prior \sim \{1\}\; \times \; U[-\epsilon, -\epsilon]$. We first
analyze the posterior distribution of $\b$. We then show that with
probability at least $1-4\delta$, until the posterior distribution differs
from the prior, Lemma \ref{lem:posterior-fair} forces $\A$ to play uniformly
from $\C{t}$.

Suppose that we have observed $(x_1,y_1) \ldots (x_{t-1}, y_{t-1})$.
Since the prior in the second coordinate is $U[-\epsilon, \epsilon]$,
and the noise $\eta_i$ is also uniform, the posterior in the second
coordinate is uniform over all $\b_2$ consistent with the
observed data in the following sense: since the noise $\eta_{t'}$ is bounded, each
  pair $(\xtpr, \ytpr)$
  gives a bound on $\b_2$. Combining $\ytpr = \xtpri{1} + \b_2\xtpri{2}
  + \etapr$
  and $\etapr \in [-1,1]$ we get
\begin{equation}\label{eq:theta_constraint}
\b_2 \in [l_{t'}, u_{t'}] = \left[\min\left(\frac{\ytpr - \xtpri{1} - 1}{\xtpri{2}},
\frac{\ytpr - \xtpri{1} + 1}{\xtpri{2}}\right), \max\left(\frac{\ytpr - \xtpri{1} -1}
{\xtpri{2}}, \frac{\ytpr - \xtpri{1} + 1}{\xtpri{2}}\right)\right].
\end{equation}

Since by the prior we know $\b_2 \in [-\epsilon, \epsilon],$ we 
say that $\b_2$ is consistent with $h_t$ if $\b_2 \in [-\epsilon,
\epsilon]$ and $\b_2 \in [\sup_{t'}l_{t'}, \inf_{t'}u_{t'}]$.
This is the content of the following lemma.

\begin{lemma}\label{lem:consistent}
  Let
  $\ytpr = \vecdot{\b}{\xtpri} + \etapr, \etapr \sim U[-1,1],$ and $\b
  \sim \{1\} \times U[-\epsilon, \epsilon]$.
  Then $\prior(\b_2|h_t)$ is uniform on the set of $\b_2$
  \textit{consistent} with $h_t$.
\end{lemma}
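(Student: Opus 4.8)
The plan is to compute the posterior density of $\b_2$ directly from Bayes' rule and observe that it is proportional to the indicator of the consistent set, which is by definition the uniform density on that set. The starting point is that $\A$'s choice distribution $\pi_{t'}$ at round $t'$ is a function only of the observable history $h_{t'} = ((x_1,y_1),\ldots,(x_{t'-1},y_{t'-1}))$ and of $\A$'s internal randomness, which is independent of $\b$; hence, conditioned on $h_{t'}$ and on $\b_2$, the conditional law of $\xtpr$ does not depend on $\b_2$. Writing $q_{t'}(\xtpr \mid h_{t'})$ for the corresponding (mixed discrete/continuous) density of $\xtpr$, and noting that conditioned on $\xtpr$ and $\b_2$ we have $\ytpr = \xtpri{1} + \b_2 \xtpri{2} + \etapr$ with $\etapr \sim U[-1,1]$, the conditional density of $\ytpr$ is $\tfrac12\,\mathbf{1}\!\left[\,|\ytpr - \xtpri{1} - \b_2\xtpri{2}| \le 1\,\right]$.

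Multiplying the prior density $\tfrac{1}{2\epsilon}\mathbf{1}[\b_2 \in [-\epsilon,\epsilon]]$ by the product over $t' < t$ of the factors $q_{t'}(\xtpr\mid h_{t'})$ and $\tfrac12\,\mathbf{1}\!\left[\,|\ytpr - \xtpri{1} - \b_2\xtpri{2}| \le 1\,\right]$, the joint density of $(h_t, \b_2)$ factors into a term that does not depend on $\b_2$ (the $q_{t'}$'s and the $\tfrac12$'s) times $\mathbf{1}[\b_2 \in [-\epsilon,\epsilon]] \prod_{t'<t}\mathbf{1}\!\left[\,|\ytpr - \xtpri{1} - \b_2\xtpri{2}| \le 1\,\right]$. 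For each $t'$ with $\xtpri{2}\neq 0$, the inequality $|\ytpr - \xtpri{1} - \b_2\xtpri{2}| \le 1$ rearranges exactly to $\b_2 \in [l_{t'}, u_{t'}]$ as in Equation~\ref{eq:theta_constraint}; for $t'$ with $\xtpri{2}=0$ the constraint is either vacuous or unsatisfiable, and since $h_t$ arose from an actual run it must be vacuous (equivalently, set $[l_{t'},u_{t'}] = \R$). Thus the posterior density of $\b_2$ given $h_t$ is proportional to $\mathbf{1}\!\left[\b_2 \in [-\epsilon,\epsilon]\cap\bigcap_{t'<t}[l_{t'},u_{t'}]\right]$, i.e.\ to the indicator of the consistent set $\left[\max(-\epsilon,\sup_{t'}l_{t'}),\ \min(\epsilon,\inf_{t'}u_{t'})\right]$, which is a bounded set of positive measure almost surely (it contains the true $\b_2$). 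A density proportional to the indicator of such a set is the uniform density on it, giving the claim.

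The only real subtlety — and the main thing to get right — is the adaptivity: one must be explicit that the law governing the choice of $\xtpr$ drops out of the Bayes computation because it is a function of $h_{t'}$ and of $\A$'s own randomness alone and therefore does not depend on $\b_2$, and one must do the (routine) measure-theoretic bookkeeping needed to treat $\pi_{t'}$ as a possibly-mixed density. Everything after that is the elementary interval algebra already recorded in Equation~\ref{eq:theta_constraint}.
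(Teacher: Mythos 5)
Your proof is correct and takes the same route as the paper: the paper's own proof is a one-line invocation of Bayes' rule stating that $p(h_t\mid\b_2)\prior(\b_2)\propto 1$ on the consistent set and $0$ otherwise, and your argument is simply the careful expansion of that line, including the (correct) observation that the adaptive choice of $\xtpr$ contributes only $\b_2$-independent factors to the likelihood. No gaps.
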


We now define and analyze $S$, the number of rounds required before the
posterior distribution of $\beta_2$ becomes non-uniform. 
Each $(\xtpr, \ytpr)$ gives the constraint on $\b$ given in Equation
~\ref{eq:theta_constraint}. This only changes the posterior from the prior if
$l_{t'} > -\epsilon$ or $u_{t'} < \epsilon$. Assume first that $\xtpri{2} > 0$ (by
symmetry, a similar argument holds for $\xtpri{2} < 0$). 
Then $u_{t'} = \dfrac{\ytpr - \xtpri{1} + 1}{\xtpri{2}}$ and we can calculate
\begin{align*}
   \Pr{}{\frac{\ytpr + 1 - \xtpri{1}}{\xtpri{2}} < \epsilon} =&\;
   \Pr{}{\frac{\etapr + \xtpri{2}\b_2 + 1}{\xtpri{2}} < \epsilon} \\
   =&\; \Pr{}{\etapr + 1 < \xtpri{2}(\epsilon-\b_2)} \\
   \leq&\; \Pr{}{\etapr + 1 < 2\epsilon} = \epsilon
\end{align*}
 where the last equality follows from the fact that
 $\ett + 1 \sim U[0,2]$. The probability that the lower bound is
 greater than $-\epsilon$ is similarly

 \[ \Pr{}{\frac{\ytpr - 1 - \xtpri{1}}{\xtpri{2}} > -\epsilon} = 
 \Pr{}{\etapr > 1 + \xtpri{2}(-\epsilon - \b_2)} \leq \epsilon.\]

 Thus the probability that any pair $(\xtpr, \ytpr)$ alters the posterior
 distribution of $\b_2$ from $U[-\epsilon, \epsilon]$ is at most
 $2\epsilon$. \footnote{Note that this bound holds regardless of the particular
 choice of $\xtpr$, which is why the probabilities above are over the draw of
 the rewards $\ytpr$, conditional on the chosen $\xtpr$.} 
 It follows that $\mathbb{P}(S \geq t') \geq (1-2\epsilon)^{t'}$,
 and that the
 posterior coincides with the prior $\prior$ for $\Omega(1/\epsilon)$ steps in
 expectation.

 Now assume that after $t-1$ steps the posterior distribution is equal
 to $\prior$: we will argue that any non-uniform distribution violates
 round fairness in round $t$ with probability at least
 $\frac{3}{4}$. Call two points $a, b \in \Ct = [ -1,1]^2$ \emph{vertically equivalent}
 if $a_1 = b_1$, i.e. they agree in their first coordinate. Consider
 some pair of points $a = (x_1, x_2), b = (x_1, x_3)\in \Ct$ which are
 vertically equivalent with $x_2 > x_3$. Suppose $\A$ plays $a$ with higher
 probability than $b$. If $\b_2 < 0$, then
 $ \vecdot{\b}{a} < \vecdot{\b}{b}$, and
 $\Pr{}{\b_2 < 0} = 1/2 > \frac{1}{4}$. Thus $\A$ violates round-fairness in
 round $t$ with probability more than $\frac{1}{4}$. Similarly,
 if $\b_2 > 0$, then $ \vecdot{\b}{a} > \vecdot{\b}{b}$,
 and $\Pr{}{\b_2 >0} = 1/2 > \frac{1}{4}$, so if $\A$ plays $b$ with
 higher probability than $a$ then $\A$ again violates round-fairness in round
 $t$ with probability strictly larger than $\frac{1}{4}$. Thus,
 any two vertically equivalent points must be played with equal
 probability.

 Next, consider any point $b \in \Ct$ of the form $\left(x_1 - \alpha, x_2 +
 \frac{2\alpha}{\epsilon}\right)$ for some $\alpha \in \mathbb{R}$. Call any two points of
 this form, for fixed $(x_1,x_2)$ and variable $\alpha \in \mathbb{R}$, 
 \emph{diagonally equivalent}. Let $a = (x_1,x_2)$. If $\b_2 > \epsilon/2$, then
 $\vecdot{\b}{b} \geq x_1 - \alpha + x_2\b_2 + \alpha = x_1 + x_2\b_2 =
 \vecdot{\b}{a}$. Since $\b_2 > \epsilon/2$ with probability $\frac{1}{4}$,
 point $b$ must be played with probability at least that of $a$ to satisfy
 round-fairness in round $t$ with probability greater than $\frac{3}{4}$.
 Symmetrically, when $\b_2< \frac{-\epsilon}{2}$, which happens with
 probability $\frac{1}{4} $, $a$ must have at least as much probability of
 being played as $b$. Thus, any two diagonally equivalent points must also be
 played with equal probability.

Given a point $x \in \Ct$, let $H_x$ denote the transitive closure
under vertical and diagonal equivalence of the point
$x$. Since points that are equivalent must be played with equal
probability, by the transitive property all points in $H_x$ must be
played with equal probability by $\A$. We now show that when $x$ is
a corner of $\Ct$, $H_x = \Ct$.
\begin{lemma}\label{lem:chain}
Let $x = (1, -1)$. Then $H_x = \Ct$.
\end{lemma}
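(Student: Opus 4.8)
The plan is to show that every point of $\Ct = [-1,1]^2$ is reachable from $x=(1,-1)$ by a finite chain of vertical and diagonal equivalences all of whose intermediate points lie in $\Ct$; since $H_x$ is by definition exactly this transitive closure, that yields $H_x = \Ct$. The governing geometric fact (this is the zig‑zag of Figure~\ref{fig:zag}) is that the diagonal lines have slope $-2/\epsilon$: a diagonal move that shifts the first coordinate by $\alpha$ shifts the second by $2\alpha/\epsilon$, so to remain inside the box a single diagonal move can change the first coordinate by at most $\epsilon$. We make up the difference by interleaving vertical moves, which cost nothing in the first coordinate and can reset the second coordinate to $-1$.

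Concretely, fix an arbitrary target $(a,b)\in\Ct$. First I would record two closure facts. (i) For any $c\in[-1,1]$ the whole segment $\{c\}\times[-1,1]$ is a single vertical‑equivalence class, so if any point of this segment lies in $H_x$ then all of it does. (ii) If $(c,-1)\in H_x$ with $c>a$, then taking $\alpha=\min(\epsilon,\,c-a)\in(0,\epsilon]$, the point $(c-\alpha,\,-1+2\alpha/\epsilon)$ is diagonally equivalent to $(c,-1)$ and, since $2\alpha/\epsilon\le 2$ and $\alpha>0$, its second coordinate lies in $(-1,1]$ and its first coordinate lies in $[a,c]\subseteq[-1,1]$; hence it lies in $\Ct$ and in $H_x$, and by (i) so does all of $\{c-\alpha\}\times[-1,1]$, in particular $(c-\alpha,-1)$.

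With these in hand I would run the obvious induction. Starting from $(1,-1)=x\in H_x$ and repeatedly applying (ii), we obtain first coordinates $1=c_0>c_1>\cdots$ with $c_{i+1}=\max(a,\,c_i-\epsilon)$, each satisfying $\{c_{i+1}\}\times[-1,1]\subseteq H_x$. Each step decreases the first coordinate by $\min(\epsilon,\,c_i-a)$, and $c_0-a=1-a\le 2$, so after at most $\lceil 2/\epsilon\rceil+1$ steps we reach $c_N=a$; thus $\{a\}\times[-1,1]\subseteq H_x$, and in particular $(a,b)\in H_x$. Since $(a,b)$ was arbitrary, $H_x=\Ct$.

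I do not expect a genuine obstacle here. The only points requiring care are the box constraint — ensuring every point named in the chain actually lies in $[-1,1]^2$, which is exactly why each diagonal step is capped at $\alpha\le\epsilon$ and preceded by a vertical reset to the bottom edge — and noting that the construction terminates after finitely many steps even when $\epsilon$ is small (it uses order $1/\epsilon$ diagonal steps). Everything else is bookkeeping with the two equivalence relations.
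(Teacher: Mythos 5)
Your proof is correct and is essentially the paper's argument: the paper also reduces to matching the first coordinate via vertical equivalence and then inducts on the number $k$ of diagonal steps of horizontal length $\alpha$ needed to travel from $1$ to $z_1$, exactly the zig-zag of Figure~\ref{fig:equivalence}. If anything, your explicit cap $\alpha\le\epsilon$ (so that $-1+2\alpha/\epsilon\le 1$) handles the box constraint more carefully than the paper's stated choice $\alpha\le\min(1,2\epsilon)$, under which an intermediate point of the chain can leave $[-1,1]^2$.
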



\begin{figure}[h]
\label{fig:zag}
\centering
\tikzstyle{vertex}=[circle,fill=blue!25,minimum size=5pt,inner sep=0pt]
\begin{tikzpicture}
      \draw[->] (-3.3,0) -- (3.3,0) node[right] {$x$};
      \draw[->] (0,-3.3) -- (0,3.3) node[above] {$y$};
      \node[vertex] (A) at (3,-3) {};
      \node[vertex] (B) at (2.5,-1.91) {};
      \node[vertex] (C) at (2.5,-3) {};
      \node[vertex] (D) at (2,-1.91) {};
      \node[vertex] (E) at (2,-3) {};
      \node[vertex] (F) at (1.5,-1.91) {};
      \node[vertex] (G) at (1.5,-3) {};
      \node[vertex] (H) at (1,-1.91) {};
      \node[vertex] (S) at (1,-2.31) {};
      \draw[-,mark=*,blue] (A) -- (B);
      \node at (3.4, -1.71)  {\tiny{$(1-\alpha, -1 +\frac{2\alpha}{\epsilon})$}};
      \node at (3.4, -3.4)  {\tiny{$(1, -1)$}};
      \node at (.7, -2.6)  {\tiny{$(z_1, z_2)$}};
      \draw[-,mark=*,red] (B) -- (C);
      \draw[-,mark=*,blue] (C) -- (D);
      \draw[-,mark=*,red] (D) -- (E);
      \draw[-,mark=*,blue] (E) -- (F);
      \draw[-,mark=*,red] (F) -- (G);
      \draw[-,mark=*,blue] (G) -- (H);
      \draw[-,mark=*,red] (H) -- (S);
\draw [thick, red,decorate,decoration={brace,amplitude=10pt,mirror},xshift=0.4pt,yshift=-0.4pt](0.8,-3.2) -- (3.0,-3.2) node[black,midway,yshift=-0.6cm] {\footnotesize $k \cdot \alpha$};
    \end{tikzpicture}
    \caption{A path connecting $(1,-1)$ to an arbitrary point
      $(z_1,z_2$): red segments are vertically equivalent, blue segments
      are diagonally equivalent.}\label{fig:equivalence}
\end{figure}

Lemma $\ref{lem:chain}$ shows that if $\A$ is fair at a given round $t$
with probability at least $\frac{3}{4}$ over the posterior, and the posterior
is $\prior$, then $\A$ must play uniformly at random from $\Ct$.

Thus, we have shown for any fair $\A$:
\begin{enumerate}
\item With probability at least $1-4\delta$, $\A$ must be fair with probability
at least $\frac{3}{4}$ at all $t \geq 1$ (Lemma ~\ref{lem:posterior-fair})
\item If $S$ is the number of rounds until $\prior \neq \prior|_{h_t}$,
$\mathbb{P}(S \geq t) \geq (1-2\epsilon)^{t}$
\item When $\prior = \prior|_{h_t}$ (i.e. $S \geq t$), and $\A$ is fair with
probability $> \frac{3}{4}$ over $\prior|_{h_t}$, then $\A$ must play uniformly
at random from $\Ct$
\end{enumerate}

Let $\epsilon < \min(1/2,1/\log(2/\delta))$ and let the event that $\A$ is fair with probability at least
$\frac{3}{4}$ over the posterior at all $t \geq 1$ be denoted by $F$. Recalling
that $S$ denotes the number of rounds required before the posterior
distribution of $\b_2$ becomes non-uniform, let
the event that $S \geq \frac{\log(1-\delta)}{\log(1-2\epsilon)}$ be denoted by
$E$. Then 
$$\Pr{}{E} \geq (1-2\epsilon)^{\tfrac{\log(1-\delta)}{\log(1-2\epsilon)}} =
1-\delta,$$
so
$$\Pr{}{E \cap F} \geq \Pr{}{E} + \Pr{}{F} - 1 \geq 1-5\delta.$$
We now condition on $F \cap E$ to show that with high probability $\regret{T} =
\tilde{\Omega}(\frac{1}{\epsilon})$:

\begin{equation}\label{eq:sne}
\begin{split}
\Pr{}{\regret{T} \geq \tilde{\Omega}\left(\frac{1}{\epsilon}\right)} \geq&\;
\Pr{}{\regret{T} \geq \tilde{\Omega}\left(\frac{1}{\epsilon}\right) \mid E \cap F}
\Pr{}{E \cap F} \\
\geq&\; \Pr{}{\regret{T} \geq \tilde{\Omega}\left(\frac{1}{\epsilon}\right) \mid E \cap F}
(1-5\delta)
\end{split}
\end{equation}

where the first inequality follows from Bayes' rule. However, we've shown that
whenever $E \cap F$ occurs, for at least
$\tfrac{\log(1-\delta)}{\log(1-2\epsilon)} \geq
\tfrac{\log(1/[1-\delta])}{2\epsilon}$ (via $\log(x) \leq x-1$ for $x > 0$) rounds
$\A$ plays uniformly at random from $\Ct$. Let $r_\A(t)$ be the regret
accrued at round $t$ by uniformly at random play,
$\Ex{}{r_\mathcal{A}(t)} = ||\b||_{1} = \Omega(1) = c$. Then
$0 \leq r_\mathcal{A}(t) \leq 2(1+\epsilon)$, and the $r_\A(t)$ are independent
since $\A$ is playing uniformly at random at each $t$. By Hoeffding's 
inequality for bounded random variables,
$$\Pr{}{\sum_{t=1}^T r_\A(t) \leq T \cdot c - 
\sqrt{2T\log(2/\delta)}(1+\epsilon)} \leq \delta$$
which means
\begin{equation}\label{eq:hoeff}
\Pr{}{\sum_{t = 1}^{T}r_\A(t) \geq T\cdot c - \sqrt{2T\log(2/\delta)}
(1+\epsilon)} \geq 1-\delta
\end{equation}
and when taking $T = \frac{1}{\epsilon}$ we get
$$\Pr{}{\sum_{t = 1}^{T}r_\A(t) \geq \frac{1}{\epsilon}\cdot c -
\sqrt{\frac{2}{\epsilon}\log(2/\delta)}(1+\epsilon)} \geq 1-\delta$$
or suppressing constants and lower order terms and using the fact that 
$\epsilon <1/\log(2/\delta)$, $\Pr{}{\sum_{t = 1}^{T}r_\A(t)
\geq \tilde{\Omega}(\frac{1}{\epsilon})} \geq 1-\delta$. This gives us that
$\Pr{}{\regret{T} \geq \tilde{\Omega}(\frac{1}{\epsilon}) \mid E \cap F} \geq
\frac{1-\delta}{1-5\delta}$. Hence by Equation~\ref{eq:sne}, $\Pr{}{\regret{T}
\geq  \tilde{\Omega}(1/\epsilon)} \geq 1-\delta$, as desired.

\end{proof}
We now provide the proofs of the lemmas used above. 
\begin{proof}[Proof of Lemma~\ref{lem:posterior-fair}]
By the definition of fairness, and Lemma ~\ref{lem:posterior}, we have that
$$\Pr{\b_t \sim \prior|_{h_t}, h_t \sim \A}{\exists t' \geq 1 \colon \A \text{\; is
round-unfair at time } t'} \leq \delta.$$
Denote this probability by $X$. By the above $\Ep[X] \leq \delta$, and hence by
Markov's inequality, $\Pr{}{X \geq \frac{1}{4}} \leq 4\delta$. But then we've
shown that, with probability at least $1-4\delta$, for all $t \geq 1$ $\A$ is
fair with probability at least $\frac{3}{4}$ over $\b \sim \prior|h_t$. Now if 
$\exists x,y$ such that $P_{\prior|h_t}(\vecdot{y'}{\b} > \vecdot{x'}{\b}) > \frac{1}{4}$ but $f_t(x) > f_t(y)$, then the probability that $\A$ is unfair at time $t$ is at least $\mathbb{P}_{\prior|h_t}(\vecdot{y'}{\b} > \vecdot{x'}{\b}) > \frac{1}{4}$. This proves the claim. 
\end{proof}

\begin{proof}[Proof of Lemma~\ref{lem:consistent}]
The fact that the posterior distribution of $\b_2$ is uniform on the set of
consistent $\b_2$ is immediate via Bayes rule:
$\prior(\b_2 |h_t) = p(h_t|\b_2)\prior(\b_2)$, where
$p(h_t|\b_2)\prior(\b_2) \propto 1$ if $\b_2$ is consistent with $h_t$, and is
$0$ otherwise.
\end{proof}

\begin{proof}[Proof of Lemma~\ref{lem:chain}]
Choose an arbitrary point $y \in \Ct$ with coordinates $(z_1, z_2)$. We want
  to show $y \in H_x$. Since any two points in $\Ct$ with the same $x$
  coordinate are vertically equivalent, it suffices to show that there
  is a point with $x$-coordinate $z_1 \in H_x$. 
  
  Fix $0 < \alpha \leq \min(1, 2\epsilon)$ and suppose $1-z_1 = k \cdot \alpha$,
  where $k \in \mathbb{N}$. Note we can guarantee $k \in \mathbb{N}$
  by choosing an appropriate $\alpha$. We now proceed by induction on
  $k$. 
  
  If $k = 1$, then by diagonal equivalence $x$ is equivalent to
  $x' = (1-\alpha, -1 + 2\alpha/\epsilon) = (z_1, 1 +
  2\alpha/\epsilon)$.
  But by vertical equivalence, $y \in H_{x'}$, and so $y \in H_x,$ by
  transitivity. For the inductive step, construct
  $x' = (z_1 + \alpha, z_2-2\alpha/\epsilon)$. Then
  $1-x'_1 = 1-z_1-\alpha = (k-1)\alpha$. Hence by induction
  $x' \in H_x$. But since $x'$ is diagonally equivalent to
  $y = (z_1, z_2)$, then $y \in H_x$ as desired. Since $y$ was
  arbitrarily chosen, $H_x = \Ct$. See Figure~\ref{fig:equivalence} for
  a visualization of these equivalences.
\end{proof}

\subsection{Proofs from Section~\ref{sec:lower_bound2}}

\begin{proof}[Proof of Theorem~\ref{thm:lb-two}]
  Let $E_{\b}$ be the event that given a fixed value of $\b$,
  $\A$ plays uniformly at random from $\Ct$ for all $t \geq 1$.
   If we
  can show that for any $\A$ and all $ \b$, it is the case
  that $\mathbb{P}(E_{\b}) = \Omega(1)$, this implies the claim,
  since for any $\b, T$
 \[\E{\regret{T}} \geq \E{\regret{T} \mid E_{\b}}
 \Pr{}{E_{\b}} = \Omega(T) \cdot  \Omega(1) = \Omega(T),\]
 as desired.

 By symmetry of $S^{1}$, $\Pr{}{E_{\b}} = \Pr{}{E_{\b'}}$ for
 all $\b, \b' \in S^{1}$. So henceforth we can drop the
 subscript $\b$, and use $E$ to represent the event that $\A$
 plays uniformly at random for all $t \geq 1$. We now exhibit a prior
 $\prior$ such that for any $\A$, $\Pr{}{E} = \Omega(1)$.

 Lemmas $\ref{lem:posterior}$ and $\ref{lem:posterior-fair}$ both
 apply; thus, we let $\b \sim \prior$, where $\prior$ is the
 uniform distribution on $S^{1}, U(S^{1})$, and we assume that at each
 time $t$, $\b$ is re-drawn from its posterior distribution
 $\prior|_{h_t}$, as before. Let $F$ again be the event that $\A$ is
 round-fair with probability at least $\frac{3}{4}$ at each round $t$,
 with respect to the posterior distribution $\prior|h_t$. We again
 analyze the posterior distribution $\prior|_{h_t}$, showing that for
 any history $h_t, \prior|_{h_t}$ forces $\A$ to play uniformly at
 random at $t$, conditioned on $F$.

 As in Section \ref{sec:lower_bound1} the posterior distribution of
 $\b|h_t$ is uniform on the set of $\b \in S^{1}$ that are
 consistent with the observed data. By consistent we again mean in the
 sense of Lemma $\ref{lem:consistent}$; the proof is nearly identical
 and relies on boundedness of the noise $\ett$, so we do not repeat
 it here. Denote by $G_t \subset S^{1}$ the set of consistent $\b$
 at time $t$.  We will use Lemma~\ref{lem:open} to reason about the
 topology of $G_t$. We use the relative topology throughout.

\begin{lemma}\label{lem:open}
  For any $t \geq 1$ and any history $h_t$, $G_t$ is a nonempty 
  connected open subset of $S^1$.
\end{lemma}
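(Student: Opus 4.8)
The plan is to first identify $G_t$ concretely and then verify the three topological properties, of which only connectedness needs real work. By (the $S^1$ analogue of) Lemma~\ref{lem:consistent}, a parameter $\b \in S^1$ is consistent with a history $h_t$ exactly when $\langle \b, x_{t'} \rangle \in [y_{t'} - 1, y_{t'} + 1]$ for every observed pair $(x_{t'}, y_{t'})$ with $t' < t$, since $\eta_{t'} = y_{t'} - \langle \b, x_{t'}\rangle$ must lie in $[-1,1]$. Because the noise distribution $\mathrm{Unif}(-1,1)$ is continuous, the events $\eta_{t'} \in \{-1, 1\}$ have probability zero and do not change the (uniform) posterior, so I will work with the open version $G_t = S^1 \cap K_t$, where $K_t := \bigcap_{t' < t} \{ z \in \mathbb{R}^2 : \langle z, x_{t'}\rangle \in (y_{t'} - 1, y_{t'} + 1) \}$ is a finite intersection of open parallel-sided slabs, hence an open convex subset of $\mathbb{R}^2$.

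Nonemptiness and openness then follow immediately. The value $\b$ that generated $h_t$ has $\eta_{t'} \in (-1,1)$ almost surely for every $t'$, so $\b \in G_t$; hence $G_t$ is nonempty for every history in the support of the interaction. For openness, each constraint $\langle z, x_{t'}\rangle \in (y_{t'}-1, y_{t'}+1)$ is an open condition on $z$, a finite intersection of open sets is open, and so $G_t = S^1 \cap K_t$ is open in the subspace topology of $S^1$.

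The substantive point is connectedness, where the plan is to exploit that $\Ct = S^1$ so that every played action satisfies $\|x_{t'}\|_2 = 1$. Then the slab $\{z : \langle z, x_{t'}\rangle \in (y_{t'}-1, y_{t'}+1)\}$ has width exactly $2$, the diameter of $S^1$, and its intersection with $S^1$ is a single arc: two opposite arcs would require the length-$2$ interval $(y_{t'}-1, y_{t'}+1)$ to lie strictly inside $[-1,1]$, which cannot happen. I would then argue by induction on the number of observations that $S^1 \cap K_t$ stays a single arc, writing $K_t = K_{t-1} \cap (\text{one new slab})$ and using convexity of $K_{t-1}$ together with the fact that the new width-$2$ slab removes from $S^1$ an arc abutting the current arc's boundary rather than excising its interior; equivalently, I want to show that $S^1 \setminus K_t$, a union of ``excluded'' caps, is connected. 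Ruling out this pinching-off of a sub-arc is the one real obstacle in the argument — the measure-zero boundary cases are already absorbed by the continuity of the noise, and everything else is bookkeeping — so the proof should reduce to establishing the clean geometric claim that the intersection of $S^1$ with a convex region cut out by width-$2$ slabs meeting the circle is connected.
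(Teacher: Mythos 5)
Your handling of nonemptiness and openness is correct and coincides with the paper's. The real issue is exactly where you place it: the ``pinching-off'' step in the connectedness induction, which you explicitly leave open. Unfortunately that step cannot be closed, because it is false. Take the true parameter $\b = (0,1)$ and a history in which the algorithm plays $x_1 = (1,0)$ and observes $y_1 = 1/2$ (noise $\eta_1 = 1/2$), then plays $x_2 = (-1,0)$ and observes $y_2 = 1/2$. The two consistency constraints are $z_1 \in (-1/2, 3/2)$ and $-z_1 \in (-1/2, 3/2)$, i.e.\ $|z_1| < 1/2$, so the consistent set is $\{(\cos\theta,\sin\theta) : \theta \in (60^{\circ},120^{\circ}) \cup (240^{\circ},300^{\circ})\}$ --- two disjoint arcs. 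After the first observation the surviving set is the arc $\{z \in S^1 : z_1 > -1/2\}$, and the second slab excises the cap $\{z \in S^1 : z_1 \geq 1/2\}$, which sits strictly in the interior of that arc: precisely the pinching you were worried about. Your single-slab claim (one width-$2$ slab meets $S^1$ in a single arc, outside a measure-zero event) is fine; it is the inductive step that fails. Since the configuration above is stable under small perturbations of the actions and the noise realizations, it arises with positive probability under uniform random play, so the lemma fails on a positive-measure set of histories, not merely on the boundary cases absorbed by continuity of the noise.

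It is worth knowing that the paper's own proof does not close this gap either: it observes that $C_t'$, the intersection of the open slabs in $\mathbb{R}^2$, is open and convex, hence connected, and then asserts that $C_t = S^1 \cap C_t'$ is ``by definition'' open and connected in the relative topology. Openness does follow by definition; connectedness does not, since the intersection of a convex set with a circle need not be connected --- the example above, where $C_t'$ is the convex slab $\{|z_1| < 1/2\}$, refutes exactly that inference. So your instinct to isolate connectedness as the one substantive claim is sound, and the honest conclusion is that neither your argument nor the paper's establishes it. A repair would have to either restrict the class of histories considered or, more likely, adapt the downstream chaining argument (Lemma~\ref{lem:chain2}) to work with a consistent set that is a finite union of arcs rather than a single arc.
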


\begin{figure}[h]
\begin{center}
\includegraphics[width=0.9\textwidth]{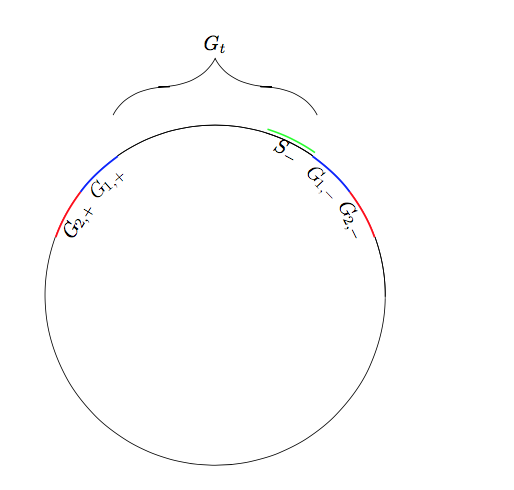}
\caption{$\A$ must play UAR from $\D = S^{1}$. $|G_t| = \epsilon$; $|S_{t, -}'| = |S_{t, -}| = \frac{3\epsilon}{4}$; $|S_{t,+}'| = |S_{t,+}| = |G_{1, +}| = |G_{1,-}| = |G_{2, +}| = |G_{2 , -}| = \epsilon$ }\label{fig:chaining}
\end{center}
\end{figure}

$G_t$ is an open, non-empty, connected subset of $S^{1}$; since we're working
in the relative topology, it must be exactly an open interval along the
boundary of $S^{1}$, as illustrated in Figure~\ref{fig:chaining}.
Let $G_t$ have length $\epsilon$, and correspondingly $\prior|_{h_t} = U(G_t)$.

Condition on the occurrence of $F$: that $\A$ must be fair in round $t$
with probability at least $\frac{3}{4}$, with respect to
$\prior|_{h_t}$. We claim that this in fact forces $\A$ to play
uniformly from $S^1$ at all time steps $t$, in an argument similar to
Lemma $\ref{lem:chain}$.

We say that two points $x, y \in S^{1}$ are equivalent at time $t$ if
$\Pr{\b \sim \prior|_{h_t}}{\vecdot{\b}{x} > \vecdot{\b}{y}} \in
[\frac{1}{4}, \frac{3}{4}]$.
Let $S_{x,t}$ be the transitive closure of the set of $y \in S^{1}$
that are equivalent to $x$ at time $t$.

\begin{lemma}\label{lem:chain2}
  Let $\prior|_{h_t} \sim U(G_t)$. Then there exists $x \in S^{1}$ such
  that $S_{x,t} = S^{1}$.
\end{lemma}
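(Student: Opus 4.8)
The plan is to make the relation of being ``equivalent at time $t$'' completely explicit using the geometry of the circle, and then to close it up by a reflect-and-iterate chaining argument (in the spirit of Lemma~\ref{lem:chain}). By Lemma~\ref{lem:open}, $G_t$ is a nonempty open arc; and applying a rotation $R\in SO(2)$ to the whole instance (replacing $G_t$ by $RG_t$ and a pair $(x,y)$ by $(Rx,Ry)$) preserves every inner product $\vecdot{\b}{x}$, hence the equivalence relation, and sends $S_{x,t}$ to $R\,S_{x,t}$, so I may assume $G_t=\{(\cos\phi,\sin\phi):\phi\in(-\epsilon/2,\epsilon/2)\}$ for some $\epsilon\in(0,2\pi]$. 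If $G_t$ has full measure then $\Pr{\b\sim\prior|_{h_t}}{\vecdot{\b}{x}>\vecdot{\b}{y}}=\tfrac12$ for every $x\neq y$, so the relation is total and $S_{x,t}=S^{1}$; assume $\epsilon<2\pi$ henceforth.

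Writing $x=(\cos a,\sin a)$, $y=(\cos b,\sin b)$, $\b=(\cos\phi,\sin\phi)$, the identity
\[
\vecdot{\b}{x}-\vecdot{\b}{y}=2\sin\tfrac{a-b}{2}\,\sin\!\big(\phi-\tfrac{a+b}{2}\big)
\]
shows that $\{\phi:\vecdot{\b}{x}>\vecdot{\b}{y}\}$ is a semicircle with endpoints at angles $\tfrac{a+b}{2}$ and $\tfrac{a+b}{2}+\pi$ (which of the two semicircles it is depends on $\mathrm{sign}(\sin\tfrac{a-b}{2})$, but the condition below is invariant under $p\mapsto 1-p$, so this does not matter). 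Evaluating $\Pr{\b\sim U(G_t)}{\vecdot{\b}{x}>\vecdot{\b}{y}}$ as $\tfrac1\epsilon$ times the length of the overlap of that semicircle with $(-\epsilon/2,\epsilon/2)$, a short computation shows this probability lies in $[\tfrac14,\tfrac34]$ — i.e.\ $x\sim y$ — whenever $\tfrac{a+b}{2}\in[-\epsilon/4,\epsilon/4]\pmod\pi$, equivalently whenever $b\in[-a-\tfrac{\epsilon}{2},\,-a+\tfrac{\epsilon}{2}]\pmod{2\pi}$. In words: the $\sim$-class of any point of angle $a$ contains the entire closed arc of length $\epsilon$ centered at $-a$, the reflection of $a$ across the midpoint of $G_t$.

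Finally I would iterate. Fix any $x=(\cos a_0,\sin a_0)$, set $A_0=\{x\}$, and let $A_{n+1}$ be the union, over all points of $A_n$ with angle $a$, of the closed arc of length $\epsilon$ centered at angle $-a$; by the previous paragraph $A_n\subseteq S_{x,t}$ for all $n$. Reflecting an arc of length $L$ centered at $c$ across the midpoint of $G_t$ and thickening by $\epsilon/2$ on each side yields an arc of length $L+\epsilon$ centered at $-c$, so by induction $A_{2k}$ is the closed arc of length $2k\epsilon$ centered at $a_0$; taking $k=\lceil\pi/\epsilon\rceil$ makes this all of $S^{1}$. Hence $S_{x,t}=S^{1}$, proving the lemma — in fact for every $x\in S^{1}$, which is the version the subsequent regret argument uses (combined with the consequence of Lemma~\ref{lem:posterior-fair} that a fair algorithm must place equal density on equivalent points, hence a constant density on $S_{x,t}=S^{1}$). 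The one genuinely fiddly step is the probability computation in the middle paragraph: once $\epsilon\ge\pi$ the semicircle can meet $G_t$ near both of its endpoints, so the naive ``overlap $=\epsilon/2-\tfrac{a+b}{2}$'' formula needs a small case split on $\epsilon$; the rotational reduction, the trigonometric identity, and the reflect-and-iterate closure are all routine.
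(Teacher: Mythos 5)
Your proof is correct, and while it shares the paper's overall strategy -- establish ``local'' equivalences of width $\Theta(\epsilon)$ and chain them around the circle -- the mechanism is genuinely different. The paper never computes the equivalence relation explicitly: it picks $x$ at the first quartile point of the posterior on $G_t$, argues that the two quartile half-arcs of $G_t$ each lie in the class of the opposite quartile point (so $G_t\subseteq S_{x,t}$), and then grows the set outward from $G_t$ by adjoining arcs of length $\epsilon/4$ at a time, each certified equivalent to a quartile point because the posterior places mass exactly $1/4$ beyond it. You instead use the product-to-sum identity $\vecdot{\b}{x}-\vecdot{\b}{y}=2\sin\tfrac{a-b}{2}\sin(\phi-\tfrac{a+b}{2})$ to characterize the relation exactly -- the class of angle $a$ contains the full arc of length $\epsilon$ centered at the reflection $-a$ of $a$ through the midpoint of $G_t$ -- and then iterate the reflect-and-thicken map. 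This buys you several things: the chaining step is a clean, checkable computation rather than a geometric assertion (the paper's claim that $\b\in S_{t,+}$ implies $\b\cdot z\le\b\cdot x$ for all $z\in S_{t,-}$ is really only justified for short arcs, a point the paper glosses over); your argument reaches points far from $G_t$ directly rather than only by marching along the boundary; and you obtain the stronger conclusion $S_{x,t}=S^1$ for \emph{every} $x$, not just for one specially chosen $x$. The one place you are right to be careful is the overlap computation for $\epsilon\ge\pi$, where the semicircle can meet $G_t$ in two pieces; the case split resolves in your favor (and for $\epsilon\ge 4\pi/3$ every pair is equivalent outright), so the stated characterization remains a valid sufficient condition for equivalence throughout.
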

\begin{proof}[Proof of Lemma~\ref{lem:chain2}]
By definition, if $\Pr{\prior|h_t}{\vecdot{\b}{x} < \vecdot{\b}{y}}
\in [\frac{1}{4}, \frac{3}{4}]$, then $y \in S_{x,t}$. Every point on $S^1$ can be
represented as $(\cos \theta, \sin \theta)$, so let $\theta_x$ denote the angle
corresponding to $x$, and let $x$ be the point in $G_t$ such that 
$\Pr{\prior |h_t}{\b <  \theta_x} = \frac{1}{4}$. 

Now let $S_{t,-} = \{z \in G_t: \theta_z \geq \theta_x \}$ and let $S_{t,+} =
\{z \in G_t: \theta_z \leq \theta_x \}$. If $\b \in S_{t,+}$, then for all
$z \in S_{t,-}, \b \cdot z \leq \b \cdot x$. By construction,
$\Pr{\prior |h_t}{\b \in S_{t,+}} = \frac{1}{4}$, and hence $S_{t,-} \subset
G_{x,t}$. But defining $x_1$ as $\Pr{\prior |h_t}{\b > \theta_{x_1}} =
\frac{1}{4}$, $S_{t,+}'$ as the set $\{z \in G_t: \theta_z > \theta_{x_1} \}$, and
$S_{t, -}'$ as $\{z \in G_t: \theta_z < \theta_{x_1} \}$, the same reasoning
shows that $S_{t, -}' \subset S_{x_1, t}$. Since $S_{t,-} \cup S_{t,-}' = G_t$,
this forces $G_t \subset S_{x_1, t} = S_{x,t}$. 

We now show $S_{x,t}$ contains the rest of the boundary of $S^{1}$, not just
$G_t$. Let $G_{+}^{1}$ denote the arc of length $\frac{1}{4}\epsilon$ adjoining
$S_{t,+}'$ as in Figure~\ref{fig:chaining}, and define $G_{-}^{1}$ accordingly.
Now note that we must have $G_{+}^{1} \in G_{x,t}$, since if $\b >
\theta_{x_1}$ then for all $z \in G_{+}^{1}, \b \cdot z > \b \cdot x$,
and $\Pr{\prior |h_t}{\b > \theta_{x_1}} = \frac{1}{4}$. Similarly,
$G_{-}^{1}$ has to be added to $S_{x_1,t} = S_{x,t}$ as well. But then letting
the segment $G_{-}^{1} \cup G_{+}^{1} \cup G_t$ be denoted by $G_t'$, we can
repeat the argument: we set $x', x_1'$ to be their initial locations $x_1, x$ 
translated $\frac{1}{4}\epsilon$ to the right and left respectively, and define
$G_{+}^{2}, G_{-}^{2}$ analogously, as in the Figure~\ref{fig:chaining}. 

Now we have that $G_{+}^{2} \in S_{x',t}$, since if $\b \in S_{t,+}'$ then for
all $z \in G_{+}^{2}, \b \cdot z > \b \cdot x'$, and hence $z \in
S_{x',t} = S_{x,t}$. The same logic shows that $G_{+}^{2} \subset S_{x_1',t} = S_{x,t}$.

Since we can keep recursively chaining segments of fixed length
$\frac{\epsilon}{4}$ to $S_{x,t}$, and $S^{1}$ is of fixed length, a simple 
induction argument forces $S_{x,t} = S^{1}$, as desired.
\end{proof}

So Lemma \ref{lem:posterior-fair} in combination with the above lemma
forces the following: when $\A$ is constrained to be fair with probability at
least $\frac{3}{4}$ with respect to the posterior distribution of
$\b$, for all times $t \geq 1$ and all histories $h_t$, $\A$ must
play uniformly at random from $S^{1}$. But then
$\mathbb{P}(E) \geq \mathbb{P}(E|F)\mathbb{P}(F) = \mathbb{P}(F) \geq
1-4\delta = \Omega(1)$,
by Lemma $\ref{lem:posterior-fair}$.
\end{proof}

\begin{proof}[Proof of Lemma~\ref{lem:open}]
  $C_t \neq \varnothing$ is immediate since, for the true value
  $\b$, $\b \in C_t$ for all $t$. For $\b \in S^{1}$ to be
  consistent with the data, i.e. in $C_t$, means that
  $\max_{1 \leq i \leq t} |y_i - \vecdot{\b}{x_i}| < 1$ and
  $\b \in S^{1}$. 
  
  We can rephrase this as follows: if
  $f_i(\b) = |y - \vecdot{\b}{x_i}|$, and
  $R_i = \{\b \in f_i^{-1}(-\infty, 1)\},$ then if we let
  $C_t' = \bigcap_{i = 1}^{t}R_i$, $C_t = C_t' \cap S^{1}$. Now we
  remark that each $R_i$ is the intersection of the two open half
  spaces $\{ \b: \vecdot{\b}{x_i} < 1 + y_i\}$ and
  $\{ \b: \vecdot{\b}{x_i} > y_i-1\}.$ Thus $C_t'$ is the
  intersection of finitely many open half spaces, and is thus an open,
  connected set (in fact, it is a convex polytope). Since
  $C_t = S^{1} \cap C_t'$, by definition $C_t$ is open and connected
  in the relative topology on $S^{1}$.
\end{proof}

\subsection{Experiments}\label{subsec:experiments}
Figure~\ref{fig:both_mistreatment} depicts experiments conducted in the \UB setting.
We employ a simple variant of UCB that maintains generic normal confidence 
intervals around its ongoing estimate of $\beta$ and uses these to construct
confidence intervals for the estimated rewards of the contexts is uses; it then
selects all choices with a positive upper confidence bound. We plot
\emph{cumulative mistreatments} through $T = 10,000$ rounds, which tracks the
cumulative number of individuals who have seen an individual with lower
expected quality chosen in a round during which they were not chosen. The plot
therefore shows that through 10,000 rounds our version of UCB creates nearly
400 such mistreated people.

Our experiments use $d = 2$ and $\beta \sim U[-1,1]^2$ for each 
iteration. In each round we generate $k = 10$ contexts $x_i$, also from 
$U[-1,1]^2$, and generate noisy rewards $\beta \cdot x_i + \eta_{t,i}$
where $\eta_{t,i} \sim N(0,1)$ is standard normal noise. The results presented
are averaged over 100 iterations. For completeness, we present
Figure~\ref{fig:both_mistreatment}, which plots cumulative mistreatments for both UCB
and FairUCB and empirically validates our theoretical fairness guarantee.

\vspace{10pt}

\begin{center}
\begin{figure}[h]
\begin{center}
\vspace{-20pt}
\includegraphics[width=0.6\textwidth]
{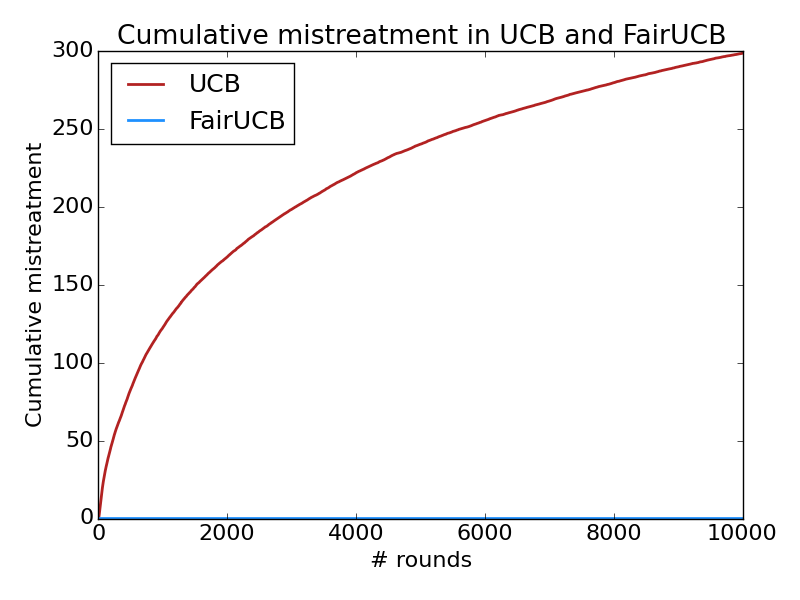}
\caption{Cumulative mistreatments for UCB and FairUCB.}
\end{center}
\label{fig:both_mistreatment}
\end{figure}
\end{center}

Our second experiment investigates the \emph{structure} of mistreatment in UCB. 
We use $d = 2, \beta = [1,0], k = 10$ for each iteration. At each round $t$ with
probability $p \in [.8, .95]$ we draw a context $(x, x),$ where $x \sim$
$U[-1,1]$ and with probability $1-p$ draw a context from $U[-1,1]^2$.
These two types of contexts naturally encode two populations: in population 1,
the two features are perfectly correlated and in population 2 they are
independent. However, $\beta = [1,0]$ crucially means that the second feature
\emph{does not affect reward}. Our experiments aim to study how this
correlation affects mistreatment rates in the different populations.
 
For each population we plot the fraction of mistreatment individuals from each
population for $T = 1, \ldots 25,$ averaging over $1000$ iterations.
Figure~\ref{fig:disparity} shows that for $p \in [.8,.95]$ unfairness accrues
at substantially different rates to the two populations. Somewhat
counter-intuitively, members of the majority group are significantly more
likely to be mistreated than members of the minority group, a natural 
consequence of UCB-style algorithms favoring minority contexts whose confidence
intervals have more uncertainty. While mistreating a majority population may be
less obviously unfair than mistreating a minority population, it is still
undesirable. In particular, there may be natural practical settings where the
group that has faced historical discrimination is the majority population in
sample (e.g. criminal sentencing) and so discriminating against the majority
is more obviously unfair. 

\vspace{10pt}
\begin{center}
\begin{figure}[h]
\begin{center}
\vspace{-20pt}
\includegraphics[width=.6\textwidth]
{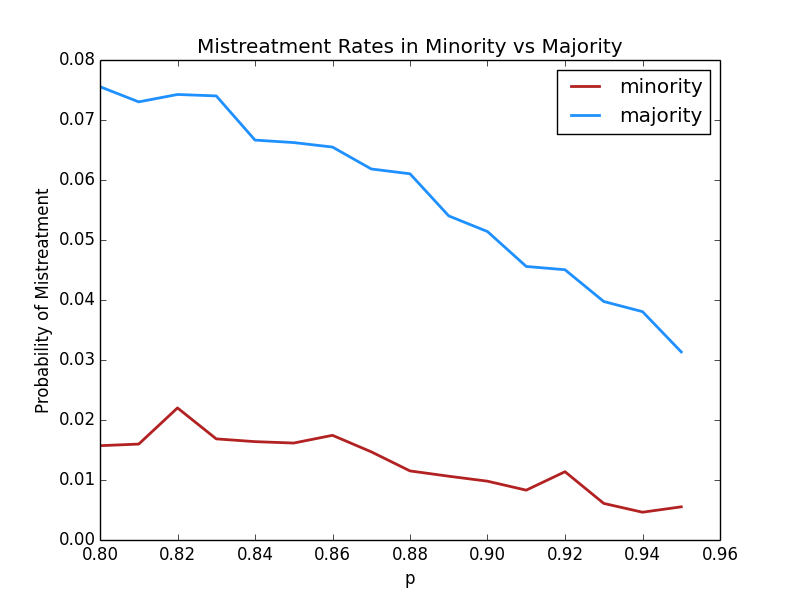}
\caption{Probability of mistreatment for subpopulations under UCB.}
\end{center}
\label{fig:disparity}
\end{figure}
\end{center}

\end{document}